\PassOptionsToPackage{numbers,sort&compress}{natbib}
\providecommand\FormatOverride{neurips}
\providecommand\LogosOverride{camlsys-cambridge}
\providecommand\TitleBgOverride{camlsys}
\IfFileExists{preamble/format.tex}{}{}
\IfFileExists{neurips_2026.sty}{}{}
\IfFileExists{main.bbl}{}{}
\documentclass{paper}

\input{preamble/packages}
\input{preamble/macros}
\input{preamble/adapter}
\makeatletter
\newcommand{\paperLogoStrip}{\begingroup
    \raisebox{-0.5\height}{\includegraphics[height=6.2mm]{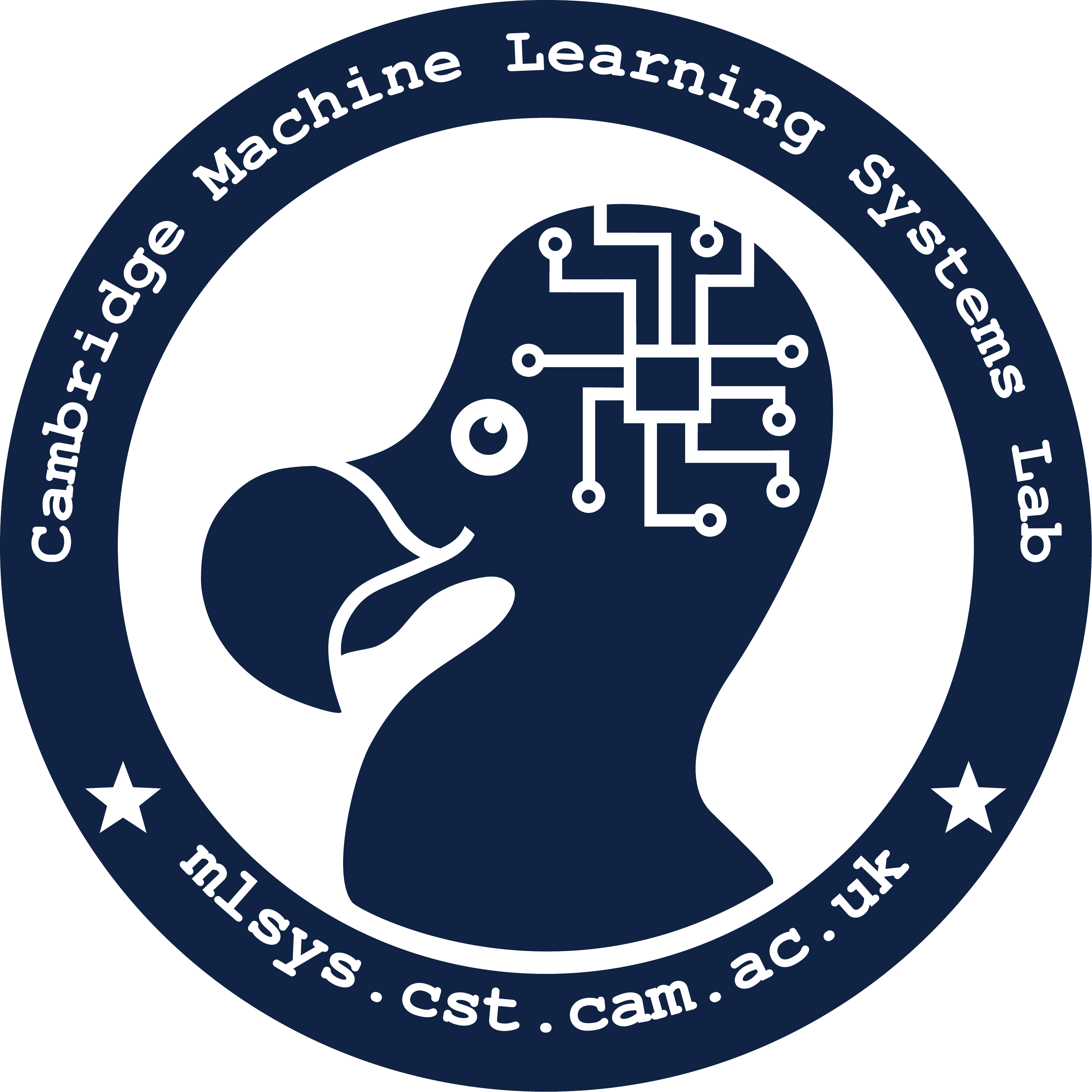}}\hspace{0.55em}\raisebox{-0.5\height}{\includegraphics[height=5.5mm]{branding/ucam-tight.png}}\hspace{0.55em}\raisebox{-0.5\height}{\includegraphics[height=6.2mm]{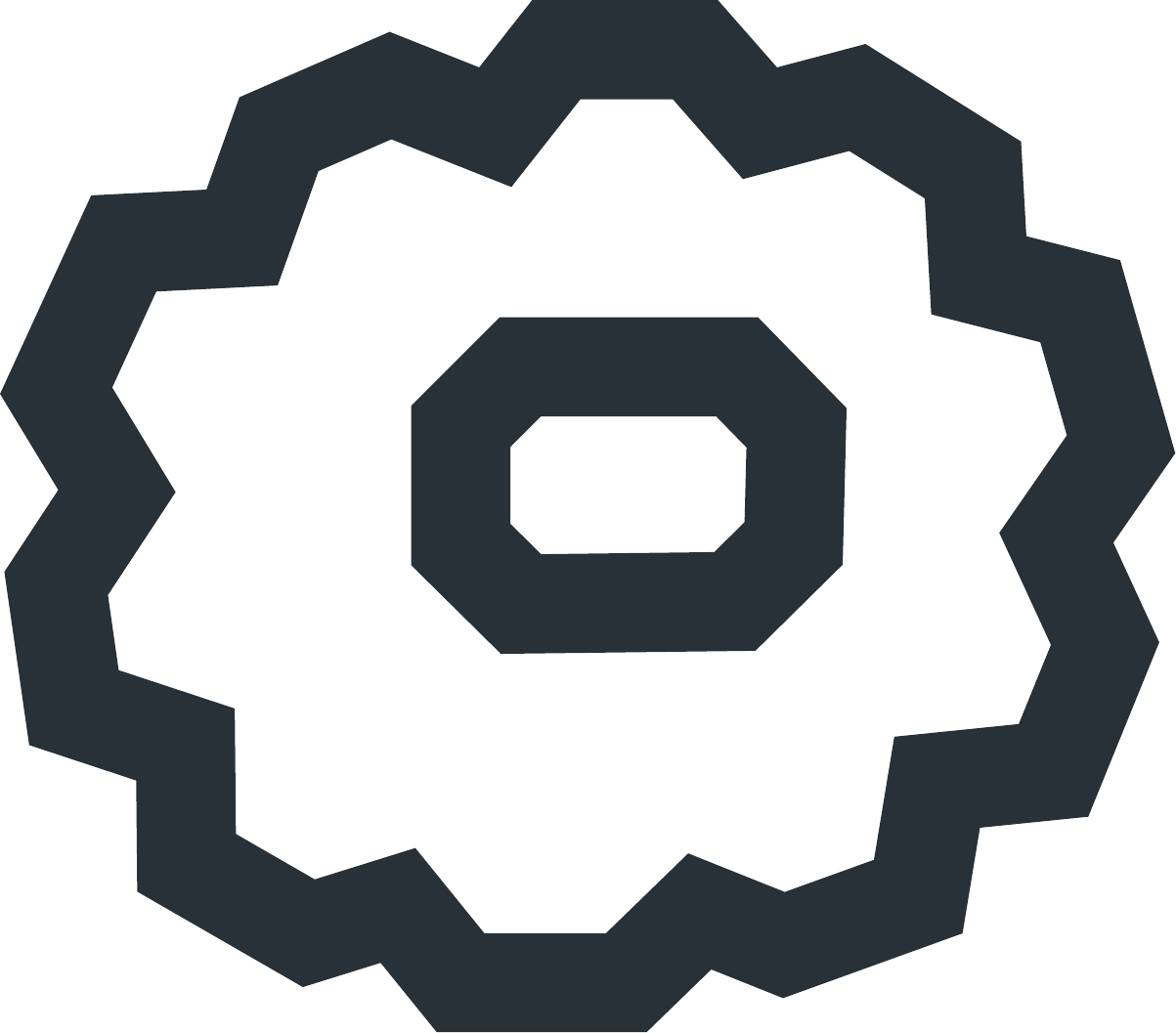}}\endgroup
}

\makeatother

\paperTitle{The Red Queen G\"odel Machine: Co-Evolving Agents and Their Evaluators}
\paperTitleRunning{The Red Queen Godel Machine}
\paperKeywords{self-improving agents, learned evaluation, multi-agent systems, automated scientific discovery, controlled utility evolution, co-evolutionary search}
\paperAbstractFile{sections/abstract}
\paperTitleBackground{rqgm-muted}
\paperTitleLogo{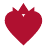}
\paperTitleLogoHeight{4.28em}

\addAffiliation{cambridge}{University of Cambridge}
\addAffiliation{nvidia}{NVIDIA}
\addAffiliation{flower}{Flower Labs}
\addAffiliation{mbzuai}{MBZUAI}
\addAffiliation{inria}{Inria}
\addAuthor{Alex Iacob}{cambridge,equal}
\addAuthor{Andrej Jovanovi\'c}{cambridge,flower,equal}
\addAuthor{William F. Shen}{cambridge,equal}
\addAuthor{Daniel Burkhardt}{nvidia}
\addAuthor{Meghdad Kurmanji}{cambridge}
\addAuthor{Nurbek Tastan}{mbzuai}
\addAuthor{Lorenzo Sani}{cambridge,flower}
\addAuthor{Niccolò Alberto Elia Venanzi}{nvidia}
\addAuthor{Ambroise Odonnat}{inria}
\addAuthor{Zeyu Cao}{cambridge}
\addAuthor{Bill Marino}{cambridge}
\addAuthor{Xinchi Qiu}{cambridge}
\addAuthor{Nicholas D. Lane}{cambridge,flower}
\correspondingAuthor{Alex Iacob}{aai30@cam.ac.uk}

\begin{document}
\makepapertitle
\footnotetext{Named after Van Valen's Red Queen hypothesis~\citep{van1973new}: species adapt to maintain fitness relative to co-evolving competitors.}
\etocdepthtag.toc{main}

\FloatBarrier

\section{Introduction}
\label{sec:intro}

Self-improving agents aim to turn local improvements in coding and reasoning into a recursive loop, where each stronger agent can produce better variants of itself. Search methods such as the Darwin and Huxley G\"odel Machines~\citep{zhang2025darwin,wang2025huxley} recently set the open-source state-of-the-art on agentic coding tasks by editing their own code and retaining variants that improve an external utility signal. \hyperagents~\citep{zhang2026hyperagents} then extended this self-improvement beyond coding to task agents in arbitrary domains. Despite this progress, current search methods still depend on stationary evaluation fixed outside the improvement loop, unlike biological evolution, where each species adapts to competitors that evolve in turn. This dependence also constrains self-improvement in three settings: when a target task has no direct benchmark, for example paper writing and proof writing lack one while paper reviewing and proof grading do not~\citep{DBLP:journals/corr/abs-2408-06292,yamada2025ai,luong2025towards}; when evaluation is slow or weakly informative~\citep{levine2018learning,burger2020mobile,NIPS2017_453fadbd}; and when static benchmarks saturate or become vulnerable to reward hacking as agents improve~\citep{DBLP:journals/corr/abs-2603-25681,DBLP:journals/corr/abs-2203-04592}.

Addressing these concerns is a prerequisite for extending self-improvement to open-ended settings like scientific research and writing. We therefore introduce the \textbf{Red Queen G\"odel Machine (\ourmethod)}, an evolutionary framework that treats evaluation as part of the search process. We study this through \emph{co-evolved} learned evaluators, which improve alongside the task agents they guide, defining the search utility. Such an evaluator can supply the search signal when no benchmark exists, add criteria a benchmark cannot capture, like code maintainability, or provide a cheaper evaluation proxy.

The core mechanism enabling this co-evolution is \emph{controlled utility evolution}, which divides search into \emph{evolutionary epochs}. Within an epoch, one evaluator is frozen and grades every task agent, supplying a stationary utility signal; the utility can change only at epoch boundaries. In parallel, the search co-evolves challenger evaluators in a codebase shared with their task agents and scores them against a held-out ground-truth dataset.  At an epoch boundary, a challenger replaces the frozen evaluator only if it statistically outperforms the incumbent on this ground truth; \theourmethod then applies \emph{selective erasure}, discarding only utility records that depended on the replaced evaluator. Because each epoch is therefore a \emph{fixed-criterion search problem}, prior self-improvement guarantees apply directly~(\cref{sec:method,app:proofs}), while the search objective can evolve across epochs, for instance by promoting a stronger evaluator or adding an adversarial-sample regularizer.

\begin{figure}[t]\centering\includegraphics[width=\linewidth]{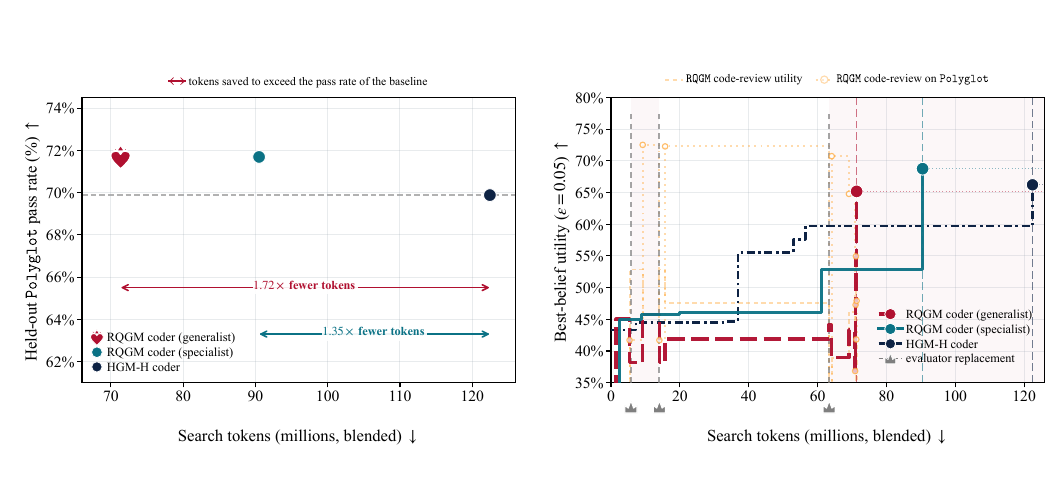}

  \caption{\textbf{\ourmethod\ exceeds the prior SOTA \hgmh\ on \polyglot with $1.35\times$--$1.72\times$ fewer search tokens by adding a cheap evolved code reviewer.} \emph{Left:} held-out pass rate vs.\ search cost; arrows give the tokens \ourmethod\ saves to exceed the baseline's rate, and the heart marks the run's best agent. \emph{Right:} best-belief utility during search. At each evaluator replacement (crowned dashed rule), the utility drops as selective erasure discards records scored by the displaced reviewer, then re-climbs under its replacement; shaded bands mark epochs. Full details in \cref{sec:experimental-design,exp_design:reading_results_figures}.}
    \label{fig:intro-token-efficiency}
\end{figure}

We provide a preliminary empirical investigation of \theourmethod across three domains following \citet{zhang2026hyperagents}: coding~\citep{gauthier2024polyglot}; paper writing and reviewing~\citep{lu2024ai,zhao2026apres}; and proof writing and grading~\citep{zhang2026hyperagents,luong2025towards}. On verifiable coding tasks, \theourmethod reaches a held-out pass rate of $\textbf{71.7}\boldsymbol{\%}$ against the prior SOTA's $69.9\boldsymbol{\%}$~(\cref{fig:intro-token-efficiency}) by adding an agent-as-a-judge code reviewer. Its quality signal complements test execution, improving the pass rate, while requiring fewer tokens compared to multi-turn coding agent execution. This illustrates a broader advantage of co-evolution: shared expansions can improve multiple utilities at once~(see \cref{fig:analysis-transfer}). We then turn to scientific paper writing and reviewing, and Olympiad-level proof writing and grading, where \theourmethod outperforms prior self-improving agents: co-evolved writers raise the acceptance rate of their papers from the prior SOTA's $21.8\boldsymbol{\%}$ to $\textbf{40.5}\boldsymbol{\%}$~(\cref{tab:paper-writer-cross-reviewer}), while a co-evolved grader exceeds the static baselines with a $\textbf{3}\boldsymbol{\times}$ lower search cost than the prior SOTA~(\cref{fig:prover-grader-results}). We further observe that as evaluators strengthen across epochs, they may impose a curriculum-like effect on the task agents, with each transition re-ranking the archive~(\cref{fig:transition-rerank,fig:tree-paper}). Co-evolution can also regularize evaluators: the strongest baseline reviewer over-accepts \AI-generated papers relative to human ones~\citep{DBLP:conf/nips/PanicksseryBF24}, but \theourmethod's evolutionary epochs let us first gather \AI-generated samples that a frozen evaluator accepts, then replay them as adversarial samples in later epochs, producing a reviewer equally stringent on machine and human work~(\cref{fig:paper-writer-reviewer-results}).

\emph{We believe \theourmethod points toward a class of self-improving co-evolving systems in which agents and evaluators recursively bootstrap one another without further human intervention to reach capabilities beyond those of static evaluation . If successful, this could be a meaningful step toward more capable \AI{} systems, at the cost of loosening convergence guarantees compared to static evaluation criteria.}
\introtakeawaybox{
 \begin{enumerate}[leftmargin=*]
 \item \textbf{Recursive Self-Improvement under Non-Stationary Utilities.}
 \Theourmethod extends self-improving agents beyond fixed external evaluation by treating the utility signal as part of the search process. This allows search objectives to evolve, opening self-improvement to learned non-stationary utilities~(\cref{sec:method}).
 \item \textbf{Controlled Utility Evolution.}
 We make non-stationary utilities compatible with prior self-improvement guarantees via \emph{controlled utility evolution}: search runs in fixed-evaluator epochs, and the objective changes only at epoch boundaries. A challenger evaluator is promoted only when it raises an $\epsilon$-best-belief score on ground truth, a lower bound its utility exceeds with probability $1-\epsilon$~(\cref{sec:method,app:proofs}).
 \item \textbf{Practical Benefits of Co-Evolved Learned Evaluators and Search Interventions.}
 Empirically, co-evolution adds a cheap learned code-review signal that improves search efficiency even where a verifier exists~(\cref{sec:exp-polyglot}), enables generator search where direct benchmarks are unavailable~(\cref{sec:exp-writer}), and enables debiasing over-lenient evaluators through adversarial objectives~(\cref{sec:exp-adversarial}).
 \end{enumerate}
 }
\section{Related Work}
\label{sec:related}
\noindent\textbf{Self-Improving \AI.}~~The G\"odel Machine~\citep{good1966speculations,schmidhuber2003godel} is a theoretical construct that improves itself through self-modification whenever it can prove a change beneficial, while meta-learning instead optimizes the learning dynamics directly~\citep{kirsch2022eliminating,lu2023arbitrary}. A line of recent systems makes the G\"odel Machine empirical by replacing its intractable proof search with archive search over observed utility: the Darwin-G\"odel Machine (\dgm)~\citep{zhang2025darwin} ranks self-modifying coding agents by benchmark utility, the Huxley-G\"odel Machine (\hgm)~\citep{wang2025huxley} scores a node by the utility of its whole descendant clade rather than its own, and \hyperagents~\citep{zhang2026hyperagents} lifts self-modification beyond coding by giving each archive node a meta-agent/task-agent pair. Related efforts self-modify full code repositories~\citep{xia2025live,zelikman2024self,yin2025godel}, refine prompts~\citep{fernando2023promptbreeder} or reasoning traces~\citep{zelikman2022star}, or self-tune for a specific domain~\citep{shen2026star}, with some restricted to hill-climbing rather than open-ended exploration~\citep{robeyns2025self}. What unites them is a fixed evaluation criterion held outside the loop for the duration of a run, which leaves them open to reward hacking~\citep{skalse2022defining} and to the position, verbosity, and self-preference biases of model-based judges~\citep{zheng2023judging,dubois2024lengthcontrolled,DBLP:conf/nips/PanicksseryBF24}. Updating the criterion over time is one response, but benchmark creation is costly~\citep{DBLP:journals/corr/abs-2603-25681,DBLP:journals/corr/abs-2203-04592} and any change forces the search to restart. We instead co-evolve the evaluator with the agents it scores, extending self-improvement to domains that have no direct benchmark at all.

\noindent\textbf{LLM-as-a-Judge and Automated Scientific Discovery.}~~Our learned evaluators build on the use of LLMs as judges~\citep{zheng2023judging,zhuge2024agent}, benchmarked for their agreement with human preferences~\citep{zhao2026apres}. Automated-discovery pipelines apply such judges at scale, but, like the self-improving systems above, hold their evaluations fixed per run~\citep{lu2024ai,yamada2025ai,gottweis2025coscientist,ghareeb2026robin,novikov2025alphaevolve,romera2024mathematical}; we instead let the judge itself improve as the search proceeds.

\noindent\textbf{Multi-Agent Co-Evolution.}~~Co-evolution against a moving target is a classical idea: fitness sharing~\citep{rosin1997new} and self-modifying policies~\citep{schmidhuber1996multi} co-adapt populations, and self-play pits an agent against stronger versions of itself~\citep{silver2016mastering,silver2018general}. Recent LLM-agent methods carry this to interacting agents but keep the evaluation fixed~\citep{chen2025multi,weng2026group}, and human--\AI{} co-improvement adapts the signal based on feedback~\citep{weston2025ai}. RQGM co-evolves the evaluator automatically, so the target is the learned utility itself.

\noindent\textbf{Open-Endedness.}~~Open-ended search aims to keep generating novel artifacts~\citep{stanley2017open,hughes2024open}, often by maintaining quality-diversity archives~\citep{mouret2015illuminating,ecoffet2019go}, and is increasingly driven by foundation models~\citep{huautomated,lu2024ai,novikov2025alphaevolve} in pursuit of the long-standing goal of a closed self-referential loop~\citep{clune2019ai}. These methods still evaluate against a static objective; co-evolving the objective is the missing piece RQGM supplies.
\section{\ourmethod: Co-Evolving Agents and Their Evaluators}
\label{sec:method}

We introduce the \textbf{Red Queen G\"odel Machine}, dubbed \ourmethod, a recursive self-improvement framework in which learned evaluators improve alongside the agents they score~(\cref{fig:method-overview}). Building on the search method of prior systems~\citep{zhang2025darwin,wang2025huxley,zhang2026hyperagents}, \ourmethod makes four modifications: (i) each archive node is a multi-agent workspace rather than a single agent; (ii) evaluators are themselves learned agentic processes; (iii) the utility functions driving the search may change at designated evolutionary epochs; and (iv) evaluators may be replaced at such points. We review the search method in \cref{sec:method:prelim}, introduce the multi-agent formulation in \cref{sec:method:multiagent,sec:method:hierarchy} and the co-evolutionary mechanisms in \cref{sec:method:generative,sec:method:utility}. \Cref{alg:main} shows the full procedure while \cref{app:proofs} provides theory.

\begin{figure}[htb]
  \centering
  \includegraphics[width=\linewidth]{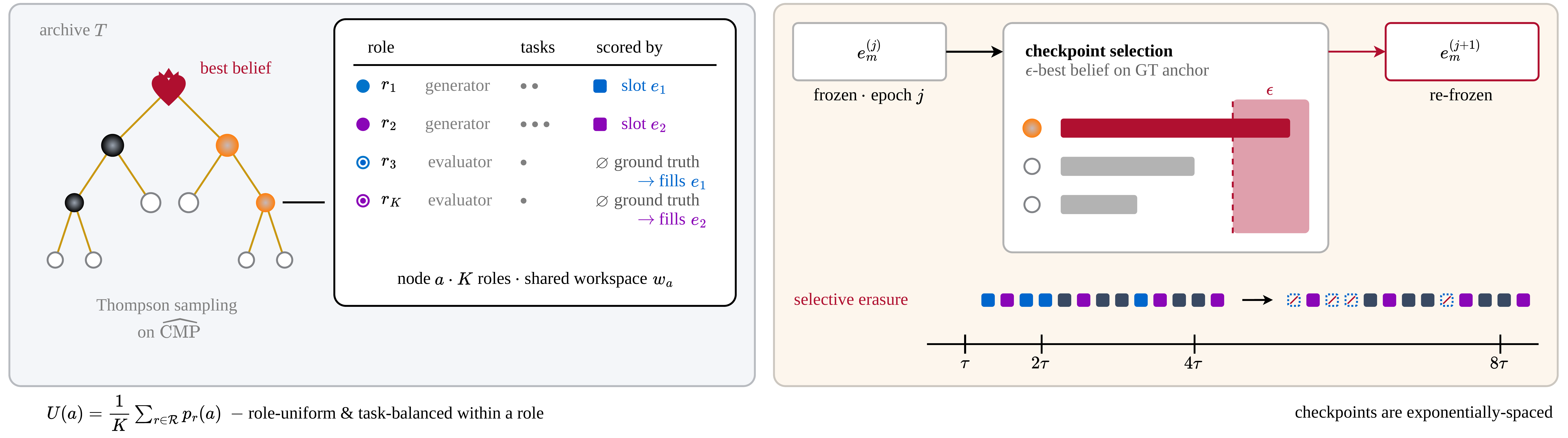}
  \caption{\textbf{\ourmethod\ searches over a multi-agent workspace tree containing both learnable task agents and evaluators.} At each step, a node is selected by Thompson sampling over clade metaproductivity and either expanded by a meta-agent or evaluated. Evaluators are scored against a ground-truth anchor and task agents by their epoch-local frozen evaluator or a fixed benchmark for evaluator-independent roles. At checkpoints, each frozen slot compares its incumbent against challenger evaluators on a ground-truth anchor; then the $\epsilon$-best-belief evaluator is frozen for the next epoch and the utility records from the displaced evaluator are erased~(\cref{alg:main}).}
  \label{fig:method-overview}
  \vspace{-0.2cm}
\end{figure}

\subsection{Preliminaries: Self-Improvement as Tree Search}
\label{sec:method:prelim}

Following \citet{wang2025huxley}, we formulate self-improvement as a tree search over a growing archive of agent nodes $\mathcal{T}_t$, initialized with a seed node $\mathcal{T}_0=\{a_0\}$. At every step $t\in\mathbb{N}$, the search either \emph{modifies} a node $a\in\mathcal{T}_t$, adding the edited copy as a child of $a$, or \emph{evaluates} a node on a \emph{task}, recording a binary outcome $o\in\{0,1\}$. These binary outcomes are aggregated into scores counting the number of successes $S_a \in \mathbb{N}$ and failures $F_a \in \mathbb{N}$ the node accumulates over several evaluations. When the allocated search evaluation budget $B \in \mathbb{N}$ is exhausted, the highest-scoring archive node is returned. Because a node's own success rate may poorly predict what its descendants will achieve, we follow the \hgm~\citep{wang2025huxley} and adopt \emph{clade metaproductivity} (\cmp) as the search utility: the success rate pooled over a node's \emph{clade} $C(a)$, defined as the subtree rooted at $a$, with nodes selected for expansion or evaluation by Thompson sampling over these clade-level outcomes. Letting $n_{\mathrm{success}}^{C}(a),\,n_{\mathrm{failure}}^{C}(a)\in\mathbb{N}$ accumulate the successes and failures of all nodes in clade $C(a)$, we define:
\[
\widehat{\cmp}(a)
=
\frac{n_{\mathrm{success}}^{C}(a)}{n_{\mathrm{success}}^{C}(a)+n_{\mathrm{failure}}^{C}(a)} \in [0,1].
\]
The final agent is selected by $\epsilon$-best-belief score $BB_\epsilon(a)=I^{-1}_{\epsilon}\!\left(1+S_a,\,1+F_a\right)\in[0,1]$, where $\epsilon\in(0,1)$ is a confidence level and $I^{-1}_{\epsilon}$ is the inverse regularized incomplete Beta function, i.e., the $\epsilon$-quantile of the Beta posterior over the agent's successes $S_a$ and failures $F_a$, a conservative utility estimate. For a repeatable evaluation criterion, \cmp suffices to approximate the G\"odel Machine~\citep{wang2025huxley}.

\subsection{Search Space}
\label{sec:method:multiagent}

\noindent\textbf{Multi-agent architecture.}~~To co-evolve agents and evaluators, we formulate each archive node $a$ not as a single agent but as a shared, evolvable workspace populated by $K\in\mathbb{N}$ agentic roles $\mathcal{R}=\{r_1,\ldots,r_K\}$ composed of task agents and evaluators. Every role $r\in\mathcal{R}$ has its own \emph{task pool} $\mathcal{D}_r$, a finite set of tasks on which it is scored, each $d\in\mathcal{D}_r$ contributing a utility signal. A paper-reviewer role, for instance, might serve two such tasks: reviewing human-written and agent-generated papers. Because each task belongs to a single role, a task uniquely identifies its role; following \hyperagents, we therefore call a role's agent a \emph{task agent} wherever the role is unambiguous. Each node is allocated a meta-agent that oversees the workspace and can modify it during the search~\citep{zhang2026hyperagents}. Since the meta-agent and role-agent code are editable, roles may re-use each other's code if the meta-agent deems it useful. Only the scoring and orchestration harness is fixed.

\noindent\textbf{Learned evaluators.}~~Open-ended roles such as paper writing admit no ground-truth signal and require learned evaluators. We call such roles \emph{evaluator-dependent}, in contrast to \emph{evaluator-independent} roles with a fixed benchmark. Each evaluator-dependent role is scored through an \emph{evaluator slot} replaceable during search; its \emph{epoch index} counts replacements. Since the $M\le K$ evaluator-dependent slots are replaced independently, we collect their epoch indices into an \emph{epoch vector} $\boldsymbol{j}=(j_1,\dots,j_M)\in\mathbb{N}^{M}$. Replacing a slot's evaluator advances its entry $\mathbb{N}^{M}\ni\boldsymbol{j}\rightarrow\boldsymbol{j}'\in\mathbb{N}^{M} $~(\cref{sec:method:utility}), so each evaluator evolves with its role while staying epoch-stationary.

\noindent\textbf{Utility definition.}~~Utility aggregates bottom-up through two uniform averages, first over a role's tasks and then over roles, so that the search is equally rewarded for optimizing all roles.\footnotemark{} The base quantity is $p_{r,d,\boldsymbol{j}}(a)\in[0,1]$, node $a$'s expected binary success rate on task $d\in\mathcal{D}_r$ under epoch vector $\boldsymbol{j}$; the $\boldsymbol{j}$ subscript makes this success rate, and hence all utilities built from it, epoch-local. Averaging uniformly over a role's tasks gives the per-role rate $p_{r,\boldsymbol{j}}(a)=\frac{1}{|\mathcal{D}_r|}\sum_{d\in\mathcal{D}_r}p_{r,d,\boldsymbol{j}}(a)\in[0,1]$, and averaging these uniformly over roles gives the agent utility $U_{\boldsymbol{j}}(a) = \frac{1}{|\mathcal{R}|}\sum_{r\in\mathcal{R}}p_{r,\boldsymbol{j}}(a)\in[0,1]$.

\footnotetext{We default to uniform over sample-weighted averaging because some tasks can generate unbounded evaluations.}

\noindent\textbf{Data isolation.}\label{sec:method:train}~~To prevent memorization and improve generalization, \ourmethod separates feedback that \emph{creates} nodes from evidence that \emph{selects} them. For evaluator-independent roles, each node sees only a subset of the training split, with disjoint validation and test splits held out. For evaluator-dependent roles, the task agent generates fresh artifacts that the epoch-local evaluator scores, with separate artifacts used for training and search. In both cases, the training results of each node and its ancestors are shown to the meta-agent to guide self-modification, but never influence search utility. Node selection is driven solely by evaluations on the validation split, preventing overfitting. Final performance is reported on a held-out test split separate from the validation data used for search.

\subsection{Three-Level Sampling Hierarchy}
\label{sec:method:hierarchy}

Following \citet{wang2025huxley}, a scheduler interleaves \emph{expansion} (adding a new node) with \emph{evaluation} (scoring an existing one) while balancing exploration against exploitation to find a good solution. At each step $t$, given the number of evaluations performed $N_t\in\mathbb{N}$, an expansion exponent $\alpha\in(0,1)$, and the archive size $|\mathcal{T}_t|$, a \emph{UCB-Air} gate~\citep{NIPS200849ae49a2} adds a new node if $N_t^{\alpha} \ge |\mathcal{T}_t|$, evaluates the existing one otherwise. Thus, the size of the archive is $|\mathcal{T}_t|=\mathcal{O}(N_t^{\alpha})$.

The scheduler first selects a node $a^\star$ by Thompson sampling over \cmp: an expansion hands $a^\star$ to its meta-agent to edit into a new child, while an evaluation descends two levels, role then task. Let $\mathcal{R}_{\mathrm{elig}}\subseteq\mathcal{R}$ be the roles eligible at $a^\star$, $\mathcal{D}_{r,\mathrm{elig}}\subseteq\mathcal{D}_r$ the eligible tasks of a role $r$, and $n_r(a),\,n_d(a)\in\mathbb{N}$ the evaluations assigned to role $r$ and task $d$ at node $a$. A task is \emph{eligible} unless its dataset is exhausted under sampling; when sampling with replacement every task stays eligible ($\mathcal{D}_{r,\mathrm{elig}}=\mathcal{D}_r$), and a role is eligible when any task is. At each level the scheduler picks the least-evaluated eligible option: the role $r^\star=\argmin_{r\in\mathcal{R}_{\mathrm{elig}}} n_r(a^\star)$ conditioned on $a^\star$, then the task $d^\star=\argmin_{d\in\mathcal{D}_{r^\star,\mathrm{elig}}} n_d(a^\star)$ conditioned on $(a^\star,r^\star)$. Each evaluation produces a binary outcome accumulated at the node level for Thompson sampling; role- and task-level counters only load balance. \Cref{prop:balanced-working-posterior} shows that, for a fixed evaluator epoch and node, balanced sampling across roles and tasks gives the pooled Beta accumulator a posterior mean converging to the role--task balanced utility $U_{\boldsymbol{j}}(a)$ almost surely.

\subsection{Co-Evolving Evaluation}
\label{sec:method:generative}

We assume learned evaluators are \emph{epoch-local stationary}~(\cref{assumption:generative}): the evaluator, artifact-generation protocol, and binary scoring rule are frozen throughout an epoch. We define a \emph{utility transition}~(\cref{def:utility-transition}) as a procedure that replaces a slot's evaluator and performs \emph{selective erasure}, removing only the utility history attached to the displaced slot while preserving all unrelated information. Selective erasure is also order-independent when multiple transitions trigger at once~(\cref{rem:erasure-commute}).
\theorybox[Epoch-Local Validity.]{If evaluator-dependent criteria are fixed within each epoch and records invalidated by a displaced evaluator are erased at transitions, then each epoch induces a fixed binary-outcome search problem~(\cref{prop:validity}). Therefore, \theourmethod can directly use \hgm search within an epoch.}

\subsection{Controlled Utility Evolution}
\label{sec:method:utility}
While fixed evaluators give each epoch a stationary utility signal, \ourmethod improves them over time by replacing them at epoch boundaries under a principled criterion. More broadly, evaluator replacement is one example of a \emph{utility transition}~(\cref{def:utility-transition}). Since our guarantees require only a fixed within-epoch utility criterion and boundary erasure of dependent records, other non-stationary utilities, such as time-varying benchmarks or adversarial objectives, fit the same structure.

\noindent\textbf{Ground-truth best-belief evaluator replacement.}~~At each epoch boundary, every slot compares its incumbent against challengers on a \emph{ground-truth anchor}: a fixed, held-out dataset of objective or human-preference labels for that role. Because this anchor is evaluator-independent, agreement with it gives a replacement criterion consistent across epochs. Candidates are ranked by the same $\epsilon$-best-belief score ($BB_\epsilon$) used for agent selection~(\cref{sec:method:prelim}), and the largest anchor $BB_\epsilon$ is frozen as the next-epoch evaluator, with ties favoring the incumbent to avoid unnecessary erasures. \Cref{prop:piecewise-fixed-validity} establishes the multi-epoch process as a sequence of fixed-criterion searches.
\theorybox[Anchor Lower Bound.]{The promoted evaluator maximizes $BB_\epsilon$, a lower bound its anchor utility exceeds with probability $1-\epsilon$, so the replacement is likely to outperform the incumbent~(\cref{prop:best-belief-lower-bound,prop:anchor-best-belief-replacement}).}

\noindent\textbf{Amortized utility-transition cost.}~~When an evaluator is replaced, records scored by the displaced evaluator are erased and the affected task agents must be re-ranked under the new one. Re-ranking the entire archive immediately would waste budget on nodes the search may never revisit. Instead, \theourmethod re-scores old nodes only when later evaluations return to them. The three-level sampling hierarchy~(\cref{sec:method:hierarchy}) naturally prioritizes under-evaluated nodes, roles, and tasks as search proceeds. Reusing cached agent outputs also lets us re-evaluate nodes without additional task-agent calls.

Erasure preserves epoch-local stationarity: keeping stale utilities or re-scaling them onto the new evaluator would mix evidence from different utility functions and violate the condition required by our guarantees. To control the re-evaluation costs we introduce a checkpoint schedule. With exponentially spaced checkpoints, the number of records exposed to erasure grows only linearly with the evaluation budget, which also enables later, more accurate evaluators to shape the search for longer.

 \theorybox[Bounded Recovery.]{For exponentially spaced checkpoints with ratio $\rho>1$, the cumulative number of prior slot-dependent records exposed to erasure or re-evaluation over a budget of $B$ evaluations is $\mathcal{O}(B)$. This reduces the $\mathcal{O}(B^2)$ cost of allowing a utility transition after every evaluation~(\cref{prop:linear-work,rem:dense-quadratic}).}

\takeawaybox[Summary of Theoretical Guarantees:]{
\Cref{prop:validity,prop:anchor-best-belief-replacement,prop:linear-work} together show that \hgm's self-improvement guarantees carry over to the co-evolutionary setting on a per-epoch basis via \emph{controlled utility evolution}. This rests on the three results below: the first secures guarantees \emph{within} each epoch, the second carries improvement \emph{across} epochs, and the third bounds the cost of doing so.
\begin{enumerate}[leftmargin=*]
\item \textbf{Epoch-Local Validity.} Each epoch is itself a fixed-criterion search problem~(\cref{prop:validity}), so \hgm's per-epoch convergence guarantees apply directly within every epoch.
\item \textbf{Anchor-Guided Improvement.} Evaluators are only promoted when they raise the $\epsilon$-best-belief score on a fixed ground-truth anchor~(\cref{prop:anchor-best-belief-replacement}), so evaluators improve \emph{across} epochs, not just within them.
\item \textbf{Amortized Transitions.} The bookkeeping cost of evaluator replacement is linear in the search budget~(\cref{prop:linear-work}), so co-evolution does not impose asymptotic overhead beyond standard archive search.
\end{enumerate}
Informally, \emph{convergence guarantees hold epoch-by-epoch for \theourmethod due to epoch-local stationarity, while controlled utility evolution encourages improvement across epoch boundaries.}
}
\section{Experimental Design}
\label{sec:experimental-design}

Our experimental design answers the following research questions about \ourmethod:
\begin{itemize}[noitemsep,topsep=1pt,parsep=2pt,partopsep=0pt,leftmargin=2.5em]
    \item[\textbf{RQ1}] Can \ourmethod bring benefits for domains where ground-truth verification exists but is expensive?
    \item[\textbf{RQ2}] Can learned evaluation improve agents in domains with no objective evaluation?

    \item[\textbf{RQ3}] Do evaluator replacements guide the search toward better task agents over time?
    \item[\textbf{RQ4}] Can co-evolution improve the evaluators themselves?
 
\end{itemize}

\noindent\textbf{Domains and ground-truth anchors.}~~Each domain pairs a generator role with a learned evaluator role and a \emph{ground-truth anchor} (\cref{fig:domain-utilities}). Paper writing is paired with paper review,  anchored to \apres accept/reject decisions~\citep{zhao2026apres}; we construct a matching writer dataset of \apres titles and abstracts to align train, validation, and test distributions across roles. Proof writing is paired with  proof grading, anchored to \imogradbench human grades~\citep{luong2025towards}, where a proof is accepted only when the epoch's frozen grader awards full credit ($7$ of $7$). Coding uses two anchors: the coding agent is anchored to executable \polyglot tests~\citep{gauthier2024polyglot}, while a co-evolved code reviewer is anchored to \crave~\citep{zhang2025crave}, a dataset of accepted/rejected pull requests, and scores each \polyglot patch at generation time.  Across domains, every role agent is initialized from a minimal template~(\cref{app:prompts}). The long runtime of \swebench~\citep{jimenez2024swebench} precluded its inclusion in this version of our work.

\noindent\textbf{Specialists and generalists.}~~Unlike prior methods, \ourmethod optimizes agents against multiple utility functions simultaneously. This raises the question of which utilities to consider when selecting a best-belief agent. We define a \emph{specialist} as the agent with the highest best-belief on one target utility, and a \emph{generalist} as the agent with the highest average best-belief across active utilities. Reporting both shows whether target-task performance comes from task-specific or joint optimization.

\noindent\textbf{Baselines.}~~Every domain uses a learned baseline, \hgmh, which replaces the \dgm search algorithm of \hyperagents~\citep{zhang2026hyperagents} with the more sample-efficient \hgm, while keeping evaluators frozen.  In each domain, \hgmh uses a fixed external evaluator from prior work. We use the \sakana prompt reviewer~\citep{lu2024ai} for paper  writing, \proofautograder~\citep{luong2025towards} for proof grading, and the \imocode prover~\citep{huang2025winninggoldimo2025}. We additionally evaluate the best published \hyperagents~(\dgmh) patches from \citet{zhang2026hyperagents} for paper reviewing and grading as a proxy for \dgm-level performance, though trained on different foundation models.  \citet{zhang2026hyperagents} did not publish the best \polyglot agent.

\noindent\textbf{Principled search interventions.}~~The co-evolutionary framework permits principled modifications of the search objective at epoch boundaries while maintaining epoch-local convergence guarantees. In the paper domain, we exploit this to correct for LLM self-preference bias~(\cref{sec:related}): after the first replacement, papers accepted by the displaced reviewer form an adversarial pool, and the subsequent epoch additionally rewards rejecting these writer-generated papers while maintaining accuracy on the human \apres data. Evaluator replacement itself stays anchored to \apres~(\cref{app:exp-details}). Additional ablations that isolate each part of \ourmethod are reported in \cref{app:nemotron-ablations}. We selected \gptlow (low) for the main text experiments, as it provides a desirable cost-intelligence trade-off~(\cref{app:exp-details}). 

\noindent\textbf{Costs and reporting.}~~We call the agent with the highest $\epsilon$-best-belief score reached during search the \emph{best-belief agent}. This score is comparable across epochs only when it rests on a fixed ground-truth anchor, so a \emph{global} best-belief winner exists only for anchored roles: the learned evaluators and the verifiable coding agent. The paper writer and proof prover have no anchor and admit only \emph{epoch-local} winners, each the best agent under its epoch's frozen evaluator; we select these per epoch and score them post-hoc against fixed external judges~(\cref{tab:paper-writer-cross-reviewer,tab:imo-proof-cross-grader}). Search budgets are matched across runs, so cost differences reflect search dynamics \textbf{rather than unequal allocation}, and we compare runs along three axes: best agent at matched compute, cost to matched quality, and best overall.

Because equal evaluation budgets can hide large compute differences when evaluation costs vary, we report \emph{blended tokens}: input plus output, with output weighted at $5\times$ input cost~\citep{openai2026pricing,anthropic2026pricing}, counting both generation and evaluation calls. Following \citet{wang2025huxley}, a best-belief agent's cost is the budget consumed when it first attains its highest $\epsilon$-best-belief score, \textbf{not when the epoch or run finishes}. Uncertainty is reported with $95\%$ central Beta (Jeffreys) intervals; further details are in \cref{app:exp-details}.

\subsection{Visualization Guide}
\label{exp_design:reading_results_figures}
All result figures~(\cref{fig:intro-token-efficiency,fig:prover-grader-results,fig:paper-writer-reviewer-results}) share one scheme, comparing several \emph{arms}: search configurations, such as the \ourmethod generalist, specialist, and \hgmh baseline. Each figure pairs a \emph{ground-truth panel}, which scores arms against the fixed anchor and so permits direct comparison, with a \emph{search-trajectory panel} tracking best-belief utility as search consumes tokens. A \emph{circle} marks a task agent (writer, prover, or coder) and a \emph{down-triangle} a learned evaluator (reviewer or grader), color identifying the arm. The \emph{crowned heart} marks the single global best-belief winner shown in the ground-truth panel.

The search-trajectory panel instead plots each task agent's best-belief utility against its \emph{own} evaluator, so heights are not comparable across arms; the cross-arm winner is read from the ground-truth panel and tables. Markers along a trajectory are that arm's epoch-local best-belief picks, later scored on held-out test data or fixed evaluators. A \emph{crown on a dashed rule} marks an evaluator replacement, and alternating \emph{shaded bands} the epochs. At each replacement the best-belief may drop, as selective erasure discards records that depended on the displaced evaluator, then re-climbs under the new, typically stricter criterion. A best-belief of $0$ marks the start of an epoch, before any record survives the new evaluator, or an agent that fails a task outright, such as code that does not compile. A trajectory ends where its best-belief agent stops changing, not because search halts (total budget is matched) but because no later agent overtakes it; a faint dotted line shows the continued trajectory. Each panel carries two legends: a main one inside for the searched agents, and a strip above for baselines and paired-role arms. Arrows mark each metric's direction, $\boldsymbol{\uparrow}$ higher-is-better and $\boldsymbol{\downarrow}$ lower-is-better; calibration axes with no preferred direction, such as acceptance rate and rank correlations, carry none.
\section{Results}
\label{sec:experiments}

We evaluate \ourmethod, validating that (i) co-evolved learned evaluation improves search even on verifiable coding tasks where ground truth exists, exceeding the \hgmh held-out pass rate at lower search cost~(\cref{sec:exp-polyglot}), and (ii) co-evolution improves generators in domains where no benchmark can score the artifact, yielding stronger paper writers and Olympiad provers~(\cref{sec:exp-writer}). \Cref{sec:analysis} then examines how utility transitions shape the search. We then validate that (iii) co-evolution improves the evaluators themselves, strengthening the grader and correcting the paper reviewer's over-leniency toward \AI-generated text~(\cref{sec:exp-adversarial}). Finally, we provide a targeted ablation showing that lower-cost search-time task-agent calls can preserve paper-domain endpoints when the final scoring model is held fixed~(\cref{sec:reducing_search_costs}). Further details and ablations are in \cref{app:exp-details,app:extended-results}.

\subsection{Learned evaluation helps even where ground truth exists (RQ1)}
\label{sec:exp-polyglot}

We now show \ourmethod improves search even in domains where objective ground-truth evaluation exists. In the \polyglot coding domain, we co-evolve a code reviewer alongside the coder. Because \polyglot is built around multi-turn agent editing, this agent-as-a-judge provides a much cheaper and complementary surrogate objective. The meta-agent learns not just whether a patch passes tests, but also whether it is of high quality. \Theourmethod exceeds the \hgmh held-out pass rate for both the specialist and generalist at $1.35\times$--$1.72\times$ lower token cost~(\cref{fig:intro-token-efficiency}). Examining where the meta-agent edits the codebase shows why co-evolving the two roles pays off: in the \polyglot run, $90\%$ of accepted patches modify shared task-agent functionality or infrastructure used by both the coder and the reviewer, rather than role-specific code~(\cref{fig:analysis-transfer}). A single such edit therefore improves both roles, suggesting co-evolution enriches the search rather than splitting effort across competing objectives.

\takeawaybox[RQ1:]{\ourmethod is a general framework for enriching the utility signals available to a search: even where ground truth exists, co-evolving a learned evaluator supplies a complementary objective that improves search efficiency, with promise for domains where evaluation is even slower.}

\subsection{Learned evaluation improves agents on domains without objective evaluation~(RQ2)}
\label{sec:exp-writer}

We now turn to domains with no objective benchmark, where only evaluator-dependent task agents exist. We score the artifacts produced by our paper writer and prover against a broad panel of reviewers and graders from both our work and prior baselines. The task agents and their evaluators interact throughout the search, but in this section we evaluate only the task agents, deferring evaluators to \cref{sec:exp-adversarial}, where objective anchor data permits further analysis.

\noindent\textbf{Paper writing.}~~Paper quality cannot be evaluated objectively, so we score writer artifacts against a fixed panel of four reviewers from our work and prior baselines~(\cref{tab:paper-writer-cross-reviewer}); the joint search of the writer and its co-evolving reviewer is shown on the right in \cref{fig:paper-writer-reviewer-results}. Our results show that co-evolving the writer with a learned reviewer improves it. At the matched-compute point where \hgmh commits to its writer, our writer already achieves a $1.78\times$ higher reviewer-panel acceptance rate on average, and across the full run it beats \hgmh as evaluated by every reviewer, with a $1.86\times$ higher acceptance rate. This is enabled by a co-evolving reviewer whose objective is regularized to favor reviewers harsher on \AI-generated text~(\cref{sec:exp-adversarial}), giving the writer a stricter signal resilient to reward hacking.

\begin{table}[htbp]
\centering
\caption{\textbf{Co-evolved writers achieve $1.78\times$ higher mean acceptance than \hgmh at matched search cost, and $1.86\times$ for the best-found specialist.} Rows are writer agents; the four \textbf{Reviewers} columns form a fixed panel from our work and prior baselines, each cell giving the percentage of writer-generated papers that reviewer accepts (mean $\pm$ 95\% Jeffreys interval). \textbf{Mean} averages across the panel, with each \ourmethod writer's gain over \hgmh in parentheses. \textbf{\ourmethod adversarial} is the \ourmethod reviewer regularized to be harsher on \AI-written text (\cref{sec:exp-adversarial}). Best per column in \textbf{bold}.}
\label{tab:paper-writer-cross-reviewer}
\small
\setlength{\tabcolsep}{5.0pt}
\renewcommand{\arraystretch}{1.12}
\begin{tabular}{lrcccc@{\hskip 10pt}r}
\toprule
& & \multicolumn{4}{c}{\textbf{Reviewer Acceptance Rate~(\%,~$\uparrow$)}} & \\
\cmidrule(lr){3-6}
\textbf{Writer} & \shortstack[r]{\textbf{Search}\\\textbf{Tokens}} & \shortstack{\textbf{\sakana}\\\textbf{\citep{lu2024ai}}} & \shortstack{\textbf{\dgmh}\\\textbf{\citep{zhang2026hyperagents}}} & \shortstack{\textbf{\hgmh}\\\textbf{\citep{wang2025huxley,zhang2026hyperagents}}} & \shortstack{\textbf{\ourmethod}\\\textbf{adversarial}} & \makebox[6.8em][c]{\textbf{Mean~(\%,~$\uparrow$)}} \\
\midrule
\hgmh writer & 42.5M & $1.0$\,{\color{black!55}\scriptsize$\pm 2.2$} & $12.0$\,{\color{black!55}\scriptsize$\pm 6.3$} & $64.0$\,{\color{black!55}\scriptsize$\pm 9.3$} & $10.0$\,{\color{black!55}\scriptsize$\pm 5.9$} & \makebox[6.8em][l]{\makebox[2.5em][r]{$21.8$}\,{\color{black!55}\scriptsize$\pm 4.0$}\,\makebox[2.6em][l]{}} \\
\addlinespace[2pt]
\ourmethod writer (generalist) & 44.6M & $2.0$\,{\color{black!55}\scriptsize$\pm 2.9$} & \textbf{43.0}\,{\color{black!55}\scriptsize$\pm 9.6$} & $81.0$\,{\color{black!55}\scriptsize$\pm 7.6$} & $29.0$\,{\color{black!55}\scriptsize$\pm 8.8$} & \makebox[6.8em][l]{\makebox[2.5em][r]{$38.8$}\,{\color{black!55}\scriptsize$\pm 4.8$}\,\makebox[2.6em][l]{\scriptsize($1.78\times$)}} \\
\addlinespace[2pt]
\ourmethod writer (specialist) & 221.8M & \textbf{5.0}\,{\color{black!55}\scriptsize$\pm 4.3$} & $40.0$\,{\color{black!55}\scriptsize$\pm 9.5$} & \textbf{86.0}\,{\color{black!55}\scriptsize$\pm 6.8$} & \textbf{31.0}\,{\color{black!55}\scriptsize$\pm 9.0$} & \makebox[6.8em][l]{\makebox[2.5em][r]{\textbf{40.5}}\,{\color{black!55}\scriptsize$\pm 4.8$}\,\makebox[2.6em][l]{\scriptsize($1.86\times$)}} \\
\bottomrule
\end{tabular}
\end{table}

\noindent\textbf{Proof writing.}~~The proof domain is substantially harder than paper writing, with improvements emerging only at longer training horizons. As shown in \cref{tab:imo-proof-cross-grader}, \theourmethod generalist performs similarly to \hgmh at matched compute. The divergence appears at longer horizons: \hgmh stagnates, which we posit is because a frozen evaluator eventually ceases to provide an informative signal. The specialist prover that \ourmethod discovers has the best panel-mean score and \texttt{Pass@6} score. As we show in \cref{sec:exp-adversarial}, this is enabled by a co-evolved grader that exceeds \proofautograder and \hgmh, providing evaluations that guide the search toward better solutions than \hgmh can.

A more nuanced comparison is the \imocode baseline of \citet{huang2025winninggoldimo2025}, a human-engineered verification-and-refinement pipeline that achieved gold-medal performance at IMO 2025. \Theourmethod specialist attains a higher panel-mean score and \texttt{Pass@6} rate, but earns them by finding more near-complete proofs ($6$ of $7$), conceding ground on the stricter \texttt{Pass@7} metric. It still exceeds the best \hgmh prover on \texttt{Pass@7}, and already beats the mean score of the \imocode baseline despite using no hand-engineered Olympiad-specific scaffold. Thus, we posit that closing the remaining \texttt{Pass@7} gap depends on increasing the search budget. Details in \cref{app:extended-results}.

\begin{table}[h]
\centering
\caption{\textbf{The co-evolved \ourmethod specialist prover attains the best mean score, while the \hgmh prover and the \ourmethod generalist fall below the baseline.} Rows are prover agents. A fixed panel of three graders from our work and prior baselines scores every prover's proofs, mirroring the reviewer panel of \cref{tab:paper-writer-cross-reviewer}. Because graders report several metrics, we pool each metric across graders, with per-grader results in \cref{tab:proof-per-grader}. \textbf{Score} is the mean grade ($0$--$7$), \textbf{\texttt{Pass@6}} the fraction scoring at least $6$ of $7$, and \textbf{\texttt{Pass@7}} the fraction earning full credit; all report $\pm$ s.e.m. Best per column in \textbf{bold}.}
\label{tab:imo-proof-cross-grader}
\small
\setlength{\tabcolsep}{6.0pt}
\renewcommand{\arraystretch}{1.12}
\begin{tabular}{lrccc}
\toprule
\textbf{Prover} & \textbf{Search Tokens} & \textbf{Score~($\uparrow$)} & \textbf{\texttt{Pass@6}~($\uparrow$)} & \textbf{\texttt{Pass@7}~($\uparrow$)} \\
\midrule
Static \imocode prover & N/A & $4.07$\,{\color{black!55}\scriptsize$\pm 0.42$} & $55.0\%$\,{\color{black!55}\scriptsize$\pm 6.4$} & \textbf{55.0\%}\,{\color{black!55}\scriptsize$\pm 6.4$} \\
\hgmh prover & 21.5M & $3.73$\,{\color{black!55}\scriptsize$\pm 0.43$} & $51.7\%$\,{\color{black!55}\scriptsize$\pm 6.5$} & $45.0\%$\,{\color{black!55}\scriptsize$\pm 6.4$} \\
\ourmethod prover (generalist) & 37.9M & $3.73$\,{\color{black!55}\scriptsize$\pm 0.43$} & $51.7\%$\,{\color{black!55}\scriptsize$\pm 6.5$} & $45.0\%$\,{\color{black!55}\scriptsize$\pm 6.4$} \\
\ourmethod prover (specialist) & 88.0M & \textbf{4.33}\,{\color{black!55}\scriptsize$\pm 0.41$} & \textbf{61.7\%}\,{\color{black!55}\scriptsize$\pm 6.3$} & $48.3\%$\,{\color{black!55}\scriptsize$\pm 6.5$} \\
\bottomrule
\end{tabular}
\end{table}

\takeawaybox[RQ2:]{Where no objective benchmark exists, co-evolving agents with learned evaluators outperform fixed-evaluator baselines: writers reach markedly higher reviewer-panel acceptance, while provers improve over longer horizons as the co-evolved grader becomes more accurate and the frozen baseline stagnates.}

\subsection{Evaluator replacements act as a curriculum on the search (RQ3)}
\label{sec:analysis}
\begin{figure}[htbp]
    \centering
    \includegraphics[width=\linewidth]{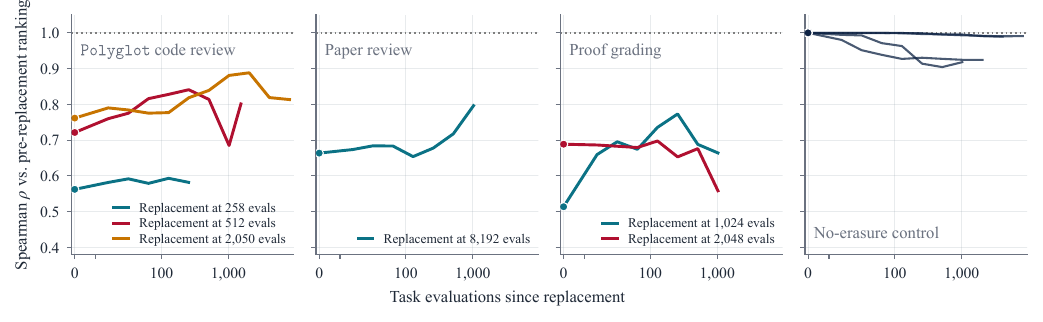}
   \caption{\textbf{Evaluator replacements permanently re-rank the archive.} Each curve tracks one replacement, plotting the Spearman $\rho$ between post- and pre-replacement rankings as evaluations accumulate: $\rho = 1$ (dotted) is the unchanged order, $\rho = 0$ an uncorrelated reordering. Across all three tasks $\rho$ settles well below $1$ and never recovers, so the new ordering holds. The no-erasure control (rightmost) stays high ($\rho \geq 0.90$), showing erasure is necessary for utility transitions to guide search.}
    \label{fig:transition-rerank}
\end{figure}
A potential mechanism by which \theourmethod can improve task agents is that a progressively stronger evaluator hardens the population over time, imposing a curriculum-like effect on the task agents. For this to hold, each transition must achieve two outcomes at once: re-rank the search under the new, stricter criterion rather than leave the old ordering in place, while still letting a strong lineage carry forward so the population advances over time rather than restarting on every transition.

\noindent\textbf{Macro view: each replacement re-ranks the archive.}~~If replacements merely sharpened pass-rate estimates without changing which agents the search favors, no curriculum could arise. \Cref{fig:transition-rerank} suggests otherwise. Some ranking structure persists across a transition, reflecting evaluator-independent evidence carried through the change, but the re-ordering is substantial and permanent, plateauing near its post-erasure level rather than recovering toward the old order. Each later evaluator thus appears to enforce a stricter criterion. Selective erasure makes this possible: the no-erasure control, which keeps stale scores, stays pinned to the displaced order and never lets the new criterion re-rank agents.

\begin{figure}[t]
    \centering\includegraphics[width=\linewidth]{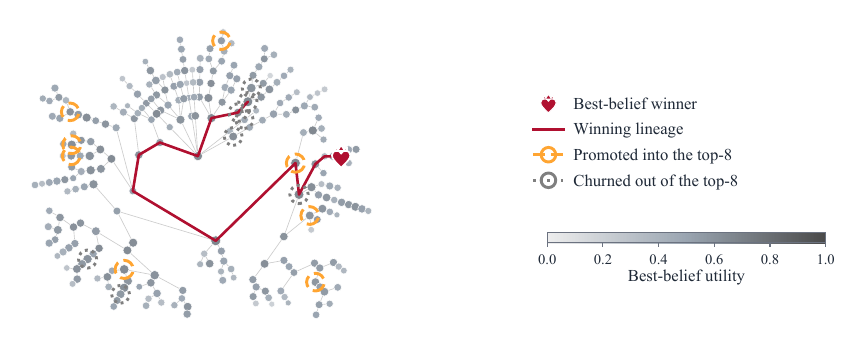}
    \vspace{-0.2cm}
\caption{\textbf{Evaluator replacement preserves the best lineage while re-ranking the remainder.} The paper-run archive after an evaluator replacement (radial layout; node color is best-belief utility, size tracks evidence count). The crimson winning lineage survives intact, ending in the crowned-heart winner. Among the top-$8$ nodes, every prior member is re-ranked (rings mark promotions and churn).}
    \label{fig:tree-paper}
\end{figure}

\noindent\textbf{Micro view: the curriculum advances a backbone lineage.}~~\Cref{fig:tree-paper} takes a finer view, with the archive concentrating evidence on a few backbone lineages. Each transition pulls lineages the displaced criterion ranked low into contention, so the new bar is met by different candidates rather than the same set re-scored. However, the best lineage stays resilient across the enlarged set, suggesting the curriculum raises the population around a stable backbone rather than scattering it.

\takeawaybox[RQ3:]{Evaluator replacements drive a curriculum-like search: erasure lets each stricter criterion re-rank the archive toward new candidates, while a resilient backbone keeps the population progressing.}

\subsection{Co-evolution improves the evaluators themselves (RQ4)}
\label{sec:exp-adversarial}
\begin{figure}[htbp]
    \centering
\includegraphics[width=\linewidth]{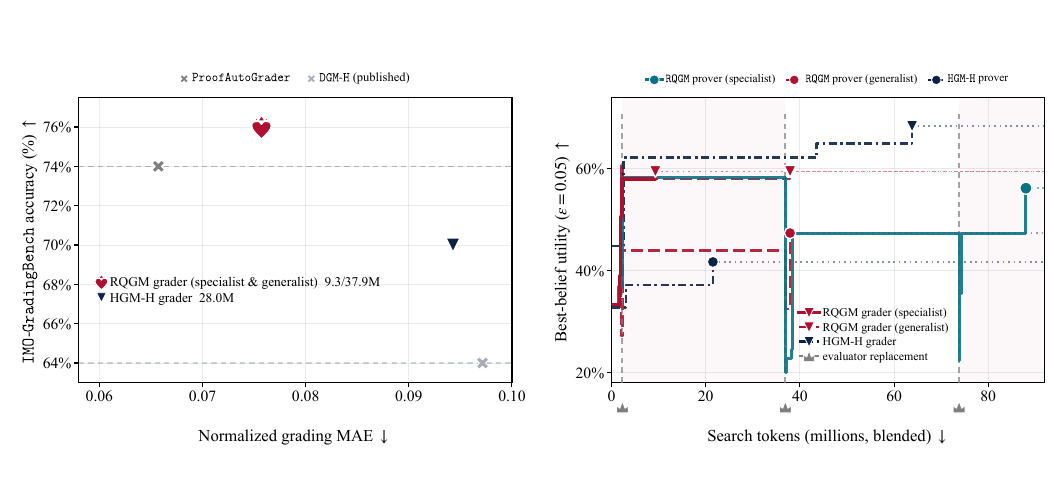}
\caption{\textbf{The co-evolved \ourmethod\ grader reaches the best \imogradbench\ accuracy at $3\times$ lower search cost than \hgmh.} A ground-truth-anchored slot has one \emph{global} best-belief winner (the crowned heart), while an evaluator-dependent slot admits only \emph{epoch-local} winners, scored post-hoc in the tables. \emph{Left:} \imogradbench\ accuracy of selected graders against grading mean absolute error (MAE); the \ourmethod\ grader's global winner (heart) is the highest-accuracy point, with specialist and generalist coinciding. \emph{Right:} the search trajectory over tokens, showing best-belief utility for the grader (top) and prover (bottom); the grader is scored on its ground-truth anchor, so its heights are comparable and carry the global winner, whereas each prover marker is only that epoch's best-belief winner under its own evaluator. Utility drops at each evaluator replacement (crown; shaded epochs) as erasure discards the displaced records, then re-climbs, and a line ends where its best-belief agent stops improving, not where search halts. For further details see \cref{sec:experimental-design,exp_design:reading_results_figures}.}
    \label{fig:prover-grader-results}
\end{figure}

While \theourmethod is not designed to improve evaluators directly, we posit that the same dynamics improving a generator can also benefit its evaluator since a better grasp of how to generate solutions may help the meta-agent discriminate them. Complementary utility signals may likewise prevent stagnation when one role plateaus. Our two learned evaluators, the proof grader and paper reviewer, differ in one crucial respect: their susceptibility to self-preference bias, an LLM judge's tendency to favor \AI-generated text~\citep{DBLP:conf/nips/PanicksseryBF24}. The grader is largely insulated and improves through co-evolution alone; the reviewer is not, needing the adversarial correction that controlled utility evolution makes possible to inform the paper writer search~(\cref{sec:experimental-design,tab:paper-writer-cross-reviewer}).

\noindent\textbf{Proof grading.}~~Grading is the more constrained task: the grader is conditioned on a reference solution and its steps, so it judges a proof against a fixed target rather than on surface plausibility. Combined with the mathematical understanding and instruction-following of foundation models~\citep{openai2025imo,openai2025science,zhou2023instruction}, this leaves little room for self-preference bias, making adversarial objectives unnecessary.

As shown in \cref{fig:prover-grader-results}, the co-evolved grader is the strongest judge, outperforming the static baselines and the best \hgmh grader, and it does so at lower search cost. Unlike our other experiments, the specialist and generalist coincide in a single node, which reaches specialist selection before generalist selection. At the specialist point it is $3\times$ more token-efficient than \hgmh; even at the later generalist point, where it costs $1.35\times$ more, it still beats the best grader \hgmh ever finds, as \hgmh never matches it within an equal search budget. Co-evolution thus yields a better grader at lower cost, and lets the grader drive the prover improvements of \cref{tab:imo-proof-cross-grader} without adversarial objectives.

\noindent\textbf{Paper review.}~~Paper reviewing is the opposite case. There is no reference answer, so a reviewer judges each paper on its own terms, exposing a known weakness of LLM judges, \emph{self-preference bias}: the tendency to accept \AI-generated text more readily than human-written text~\citep{DBLP:conf/nips/PanicksseryBF24,jiang2025badscientist}. The \apres benchmark compounds this, as its accept/reject balance rewards lenient reviewers, so a reviewer's bias toward \AI-generated text and its raw \apres accuracy pull in the same direction: the \hgmh reviewer scoring writers in \cref{tab:paper-writer-cross-reviewer} accepts \AI-generated papers at $1.42\times$--$1.91\times$ the rate of human ones. The \sakana baseline shows the converse failure: even a real NeurIPS form, held fixed, is too harsh, giving the \hgmh writer it guides a weak signal. We want not an accurate reviewer for its own sake, but one giving the open-ended writer it co-evolves with a strong, hard-to-hack signal that grows harsher over time. This justifies modifying the objective away from raw \apres accuracy.

\Theourmethod allows us to correct for LLM self-preference bias by exploiting an epoch boundary. After each replacement, the \AI-generated papers the displaced reviewer had accepted form an adversarial pool, and the next epoch additionally rewards evolved reviewers for rejecting them. The search then selects a reviewer that is harsh specifically on \AI-generated text, which costs raw accuracy by construction and leaves our best-found reviewer below the lenient \hgmh. What it gains is calibration: the reviewer accepts \AI-generated and real papers at similar rates while retaining $80\%$ ground-truth accuracy. Had evaluators and generators evolved independently, the writer could have reward-hacked it, so co-evolution yields a \emph{meaningful decision boundary} between human- and \AI-generated text.

\begin{figure}[h]
    \centering
    \includegraphics[width=\linewidth]{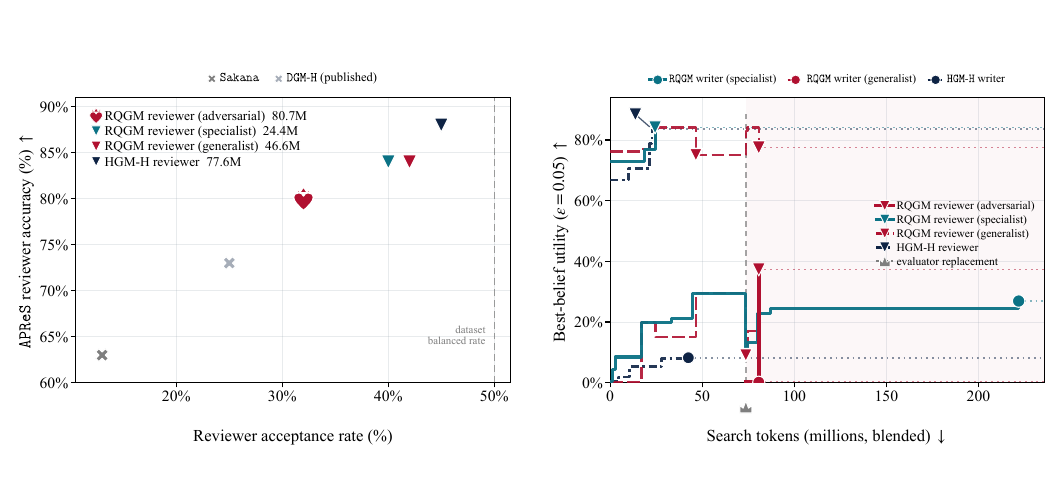}
    \caption{\textbf{The adversarial \ourmethod\ reviewer accepts \AI{} and human papers at similar rates, a calibrated accept/reject boundary that drives the strongest writer~(\cref{tab:paper-writer-cross-reviewer}); \hgmh\ reaches higher raw \apres\ accuracy only by over-accepting \AI-generated papers, which leaves its writer weak.} \emph{Left:} \apres\ accuracy of selected reviewers against acceptance rate; the dashed line marks the dataset's true accept rate. The \ourmethod\ reviewer keeps high accuracy at a low acceptance rate, between the lenient \hgmh\ and the over-harsh \sakana. \emph{Right:} the same run over tokens, writers (top) and reviewers (bottom); at the adversarial-pool replacement the generalist's average best-belief drops as erasure re-ranks the affected utilities, then re-climbs under the harsher criterion.}
    \label{fig:paper-writer-reviewer-results}
\end{figure}

\takeawaybox[RQ4:]{The same search dynamics that improve generators also improve evaluators: co-evolution finds a stronger grader at lower cost than fixed-evaluator baselines, and, by introducing an adversarial objective at an epoch boundary, corrects a reviewer's self-preference bias to give the writer a harder-to-hack signal.}

\subsection{Reducing search costs via hybrid-model approaches}
\label{sec:reducing_search_costs}

Sharing expansions across task and evaluator roles provides part of \theourmethod's efficiency~(\cref{fig:analysis-transfer}), but expansion calls account for only $\approx 20\%$ of total search cost across our domains~(\cref{fig:analysis-cost}). To further reduce costs, we consider a hybrid-model approach: keep the meta-agent as \gptlow (low), but route search-time task-agent calls through the faster and lower-cost \nemotronultra~\citep{nvidia2026nemotron3ultraopen}.

\begin{wrapfigure}{r}{0.50\linewidth}
\vspace{-0.6\baselineskip}
\centering
\includegraphics[width=\linewidth]{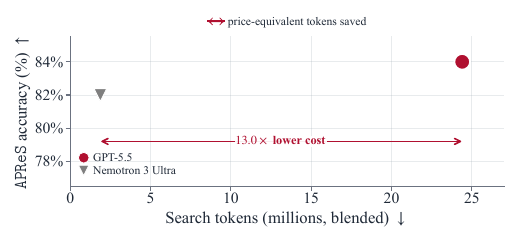}
\caption{\textbf{\nemotronultra task-agent calls approach the \gptlow-only run at $\approx$\textbf{13.0}$\boldsymbol{\times}$ lower search-token cost.} On the held-out \apres split, final \gptlow evaluation shows that the \nemotronultra-discovered harness approaches the performance of our \gptlow-only harness. Since raw tokens are not directly comparable, we report \gptlow-price-equivalent blended tokens based on the best available pricing at the time of writing. Expansion calls are charged the same as in \gptlow-only runs~(\cref{app:domain-protocols}).}
\label{fig:nemotron-cost-shift}
\end{wrapfigure}

\noindent\textbf{Price-equivalent accounting.}~~As shown in \cref{fig:nemotron-cost-shift}, replacing \gptlow with \nemotronultra as the task agent reduces search-token costs by $\approx$\textbf{13.0}$\boldsymbol{\times}$ while approaching the \gptlow-only run's accuracy under the same final evaluation. We expect this benefit to be domain-specific, depending on whether the cheaper task agent provides an informative signal. If the task-agent foundation model is insufficiently capable for a domain, for example constructing novel proofs, it may increase search costs because different meta-agent expansions become hard to distinguish in quality.
\vspace{-0.17cm}
\takeawaybox[Cost ablation:]{Empirically, the primary cost driver in \theourmethod search is evaluation, not expansion. In paper review, routing search-time task-agent calls through a more efficient foundation model substantially reduces search cost with tolerable final-performance loss; this trade-off remains to be investigated for other domains.}
\clearpage
\relax

\section{Conclusion}
\label{sec:conclusion}
We introduced the \textbf{Red Queen G\"odel Machine (\ourmethod)}, a framework in which evaluators co-evolve with the agents they score. Our results suggest two broad principles. First, co-evolving the evaluator alongside the generator enables improvement on hard-to-verify tasks such as paper writing and proof writing, where a fixed benchmark cannot score the artifact directly, mirroring co-evolutionary dynamics observed in nature~\citep{van1973new}. Second, co-evolved systems match or exceed fixed-evaluator baselines while often being more token-efficient, which we posit is due to shared expansion costs, more heterogeneous utility signals, and a curriculum-like effect in which a progressively stricter evaluator hardens the population over time. Underlying both is the framework's accommodation of non-stationary utilities: because controlled utility evolution lets the search objective change at epoch boundaries while keeping per-epoch guarantees intact, interventions a fixed objective cannot express, such as the adversarial term that corrects the reviewer's self-preference bias, become available mid-search.

\subsection{Future Research Directions}
\label{sec:future}
Our investigation is preliminary, drawn from short search horizons we intend to extend. The framework's premise is that longer co-evolution compounds these effects, extending the curriculum so that each generation of agents faces a stricter and more discerning judge. The limit of this improvement is governed by the anchor. Anchoring every replacement to fixed ground truth keeps evaluators accurate but confines them to the anchor's own decision boundary. Improvement far beyond the benchmark must therefore come from the objectives layered on top of the ground-truth signal, with the anchor serving as a guardrail against drift. The adversarial reviewer is the first such instance: it roughly maintains accuracy on the ground truth while the adversarial term discovers a human--machine decision boundary the benchmark never specified. Richer objectives of this kind are, we believe, the path toward systems that bootstrap their own evaluation beyond the reach of static benchmarks. We identify two limitations with the current framework. First, evaluator quality is only as good as its anchor: a weak or biased anchor could yield uninformative evaluators. Second, our theoretical guarantees are epoch-local, covering improvement only within an epoch. We discuss the limitations of \theourmethod in full in \cref{app:limitations}, and will update this work as we explore these future research directions.

\section*{Acknowledgments}

This research was supported by the following entities: NVIDIA, The Royal Academy of Engineering via DANTE (a RAEng Chair); the European Research Council, specifically the REDIAL project; SPRIND under the composite learning challenge; Foresight Institute via the AI Safety Grant; Google through a Google Academic Research Award.

\FloatBarrier
\bibliographystyle{unsrtnat}
\bibliography{bib/references}

\clearpage
\appendix
\crefalias{section}{appendix}
\crefalias{subsection}{appendix}
\crefalias{subsubsection}{appendix}
\etocdepthtag.toc{appendix}
\clearpage
\raggedbottom
\phantomsection
\label{sec:appendix}
\etocdepthtag.toc{main}
\etocdepthtag.toc{appendix}

\vspace{0.8cm}
\begingroup
  \hypersetup{linkcolor=black}
  \color{black}
  \etocsettagdepth{main}{none}
  \etocsettagdepth{appendix}{subsection}
  \etocsettocstyle{\centering\bfseries {\Large Table of Contents}\par
    \vspace{0.1\baselineskip}
    \hrule
    \vspace{0.4\baselineskip}
  }{\vspace{0.4\baselineskip}
    \hrule height 0.4pt
  }
  \tableofcontents
\endgroup
\vspace{0.8\baselineskip}

\section{Appendix Overview}
This appendix covers the \ourmethod algorithm~(\cref{app:algo}), experimental details~(\cref{app:exp-details}), experimental results~(\cref{app:extended-results}), how the meta-agent modified the code~(\cref{app:evaluator-evolution}), the theory~(\cref{app:proofs}), and the limitations of our work~(\cref{app:limitations}).

\section{The Algorithm in Full}
\label{app:algo}

\Cref{alg:main} states the full \ourmethod{} procedure of \cref{sec:method}. Lines 1--4 initialize the frozen-epoch archive from the seed workspace $a_0$: every evaluator slot is set to its first epoch, frozen on its incumbent, and the seed is train-evaluated for lineage evidence only. Lines 5--21 run the inherited \cmp-guided search between checkpoints. At each iteration, the expansion gate (line 9) either expands or evaluates: an expansion samples a parent clade by metaproductivity (line 10), edits its workspace into a child (line 11), and admits the valid child to the archive with its own lineage-only train evaluation (lines 12--15); the iteration then samples a node by \cmp (line 17), picks its least-measured role--task cell (line 18), spends one unit of validation budget on that cell under the frozen evaluators (line 19), and records the binary outcome (line 20). Lines 22--31 run only at a checkpoint, the only point an evaluator can change. For each slot, the incumbent and its challengers are scored by the $\epsilon$-best-belief lower bound on evaluator-independent anchor counts (lines 24--25), and the highest is promoted with ties won by the incumbent (line 26); on a real change (line 27), the slot advances its epoch, freezes the new evaluator, erases the displaced slot's records, and recomputes the metaproductivity statistics (lines 28--29). Finally, lines 33--35 return the archive node with the highest $\epsilon$-best-belief score.

\makeatletter
\ifpaper@twocolumn
  \begin{algorithm*}[t]
\else
  \begin{algorithm}[t]
\fi
\DontPrintSemicolon
\LinesNumbered
\SetAlgoLined
\SetNlSty{textbf}{\scriptsize}{}
\SetAlgoNlRelativeSize{-1}
\footnotesize
\KwIn{Seed workspace $a_0$; roles $\mathcal R$; validation budget $B$; evaluator slots $1{:}M$}
\KwIn{Anchor sets $\mathrm{GT}[m]$; checkpoints $\mathcal C$; caps $(\bar B_{\rm exp},\bar B_{\rm train},\bar B_{\rm val})$; best-belief level $\epsilon$}
\KwOut{Archive $\mathcal T_B$ and selected endpoint agent $a^\star$}
\BlankLine
\textbf{Initialize the frozen-epoch archive.}\;
$\mathcal T \leftarrow \{a_0\}$; $N \leftarrow 0$; $j[m]\leftarrow 1$ for all slots $m$\;
$E[m]\leftarrow \func{Freeze}(\func{Incumbent}(m))$ for all slots $m$\;
$\func{TrainEval}(a_0,\mathcal R,E,\bar B_{\rm train})$\tcp*{lineage evidence only}
\BlankLine
\textbf{Run CMP-guided search between checkpoints.}\;
\While{$N < B$}{
  $b \leftarrow \min\{c\in\mathcal C:c>N\}\wedge B$\tcp*{next checkpoint}
  \While{$N < b$}{
    \If{$\func{ExpandGate}(\mathcal T,N)$}{
      $a_p \leftarrow \func{SampleCladeByCMP}(\mathcal T)$\;
      $a' \leftarrow \func{EditWorkspace}(a_p,\mathcal T,\bar B_{\rm exp})$\;
      \If{$\func{Valid}(a')$}{
        $\mathcal T \leftarrow \mathcal T \cup \{a'\}$\;
        $\func{TrainEval}(a',\mathcal R,E,\bar B_{\rm train})$\tcp*{not search utility}
      }
    }
    $a \leftarrow \func{SampleCladeByCMP}(\mathcal T)$\tcp*{node-level \hgm choice}
    $(r,d) \leftarrow \func{LeastMeasuredCell}(a,\mathcal R)$\tcp*{role/task balancing}
    $o \leftarrow \func{ValEval}(a,r,d,E,\bar B_{\rm val})$\;
    $\func{AddUtilityRecord}(\mathcal T,a,r,d,o,\mathbf j)$; $N \leftarrow N+1$\;
  }
\BlankLine
  \textbf{At checkpoint, replace evaluators only by anchor best-belief.}\;
  \ForEach{evaluator slot $m$}{
    $\Gamma_m \leftarrow \{E[m]\}\cup \func{Challengers}(m,\mathcal T)$\;
    $\mathrm{BB}[e] \leftarrow I_\epsilon^{-1}(1+S_e^{\rm gt},1+F_e^{\rm gt})$ for each $e\in\Gamma_m$\;
    $e^\star \leftarrow \func{Best}(\Gamma_m,\mathrm{BB};\ \func{tie}=E[m])$\;
    \If{$e^\star \neq E[m]$}{
      $j[m]\leftarrow j[m]+1$; $E[m]\leftarrow \func{Freeze}(e^\star)$\;
      $\func{EraseSlotRecords}(\mathcal T,m)$; $\func{RecomputeCMPStats}(\mathcal T)$\;
    }
  }
}
\BlankLine
\textbf{Select the endpoint by best-belief over archive nodes.}\;
$a^\star \leftarrow \func{BestBeliefNode}(\mathcal T,\epsilon)$\;
\Return{$(\mathcal T_B,a^\star)$}\;
\caption{\small Red Queen G\"odel Machine search with anchor-only evaluator replacement.}
\label{alg:main}
\ifpaper@twocolumn
  \end{algorithm*}
\else
  \end{algorithm}
\fi
\makeatother

\section{Experimental Setup}
\label{app:exp-details}

\subsection{Domains and Anchor pairs}

\Cref{fig:domain-utilities} shows how each domain pairs a generator role with a learned evaluator role and a \emph{ground-truth anchor}. 

\begin{widefigure}[tb]
\centering

\includegraphics[width=\linewidth]{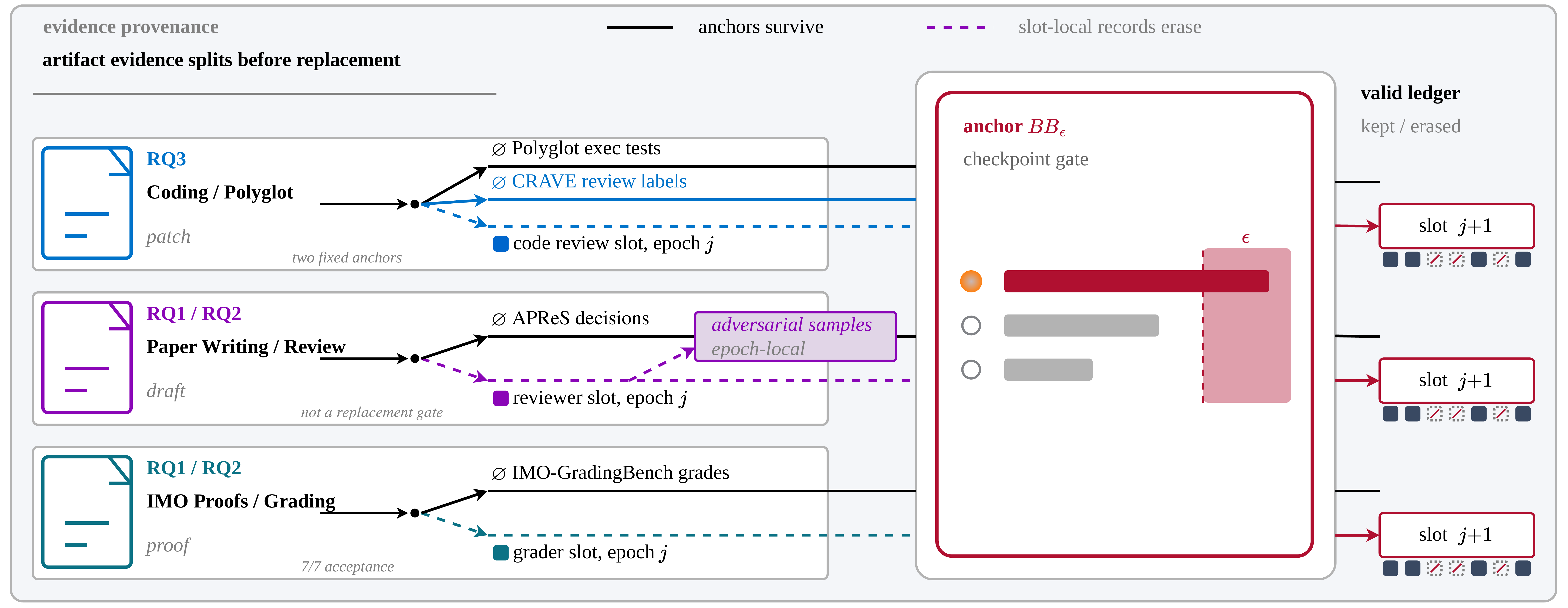}
\caption{\textbf{Each domain separates durable anchor evidence from epoch-local evaluator records.} \emph{Left:} fixed ground-truth anchors (solid lines) survive evaluator replacement, while epoch-frozen evaluator-slot records (dashed lines) are local to epoch $j_m$. \emph{Middle:} only evaluator-independent anchor evidence enters the $\epsilon$-best-belief evaluator replacement gate. \emph{Right:} after replacement, anchors remain valid while records from the displaced evaluator slot are erased before epoch $j_m+1$.}
\label{fig:domain-utilities}

\end{widefigure}

\subsection{Models and Run Identities}
\label{app:fm}

Every search run in the main experiments of the paper, held-out evaluation, and panel-scoring call uses \gptlow (low). The cost-reduction ablation of \cref{sec:reducing_search_costs} changes only the search-time task-agent route: the meta-agent remains \gptlow (low), the paper-domain task-agent calls use \nemotronultra during search, and the final endpoint scoring is run with the same \gptlow (low) reviewer as the main paper. \nemotron~\citep{nemotron_super} is used only in the mechanism ablations of \cref{app:nemotron-ablations}. We display the blended token totals: train-time generation and evaluation calls are counted together with search-time utility evaluations~(\cref{sec:method:train}). Overall, there are eight main experiments, one per domain per baseline: coding (\ourmethod{} and \hgmh), paper writing (one \ourmethod{} run yielding the writer, reviewer, and adversarial critic, plus the \hgmh reviewer and the \hgmh writer), and proof (one \ourmethod{} run yielding the prover and grader, plus the \hgmh grader and the \hgmh prover). Every task runs to the same total budget; \hgmh receives the identical exploration budget and typically plateaus earlier. Each agent is read at the point its search reaches its best-belief selection. We allow more powerful models to delegate token processing to cheaper ones. For this paper, we gave \gptlow (low) access to \nemotron; however, we never observed this functionality being used. Similarly, we imposed generous budget caps as a safeguard for expansions and evaluations, which the meta-agent did know about, potentially altering its behavior; however, we never observed an individual node approaching them. Stricter caps may have caused early eval/expansion termination.

\subsection{Search and Scheduling Settings}
\label{app:hyperparams}

Search runs the \hgm clade metaproductivity search of \cref{app:prelim} with frozen evaluator slots and checkpoint-only replacement; replacement and best-belief selection share the $\epsilon$-best-belief rule of \cref{app:swap-rule} applied to nodes with at least $5$ accumulated evaluations. \Cref{tab:settings-fixed} lists the fixed search configuration and \cref{tab:settings-editable} the meta-agent-editable surface with its initial values. Any hyperparameter not listed is taken from \citet{wang2025huxley}. To determine the search budget, we considered the reported $8{,}000$ evaluation budget used by \citet{wang2025huxley} for a single task, chose an economical middle-ground between doubling the search budget for our two task pairs~($16{,}000$) and using the same budget as a single task, settling on $12{,}288 = 8{,}192 + 4{,}096$. 

\begin{table}[ht]
\centering
\caption{\textbf{Fixed search configuration shared by the eight headline runs.} These constants are held fixed across every run and are not exposed to the meta-agent. No search cool-down is used.}
\label{tab:settings-fixed}
\small
\begin{tabular}{@{}ll@{}}
\toprule
Setting & Value \\
\midrule
\multicolumn{2}{@{}l}{\textit{Models and budget}}\\
Base model & \gptlow (low) \\
Ablation model & \nemotron \\
Total budget per run & $12{,}288$ evaluations \\
Train samples per node & $3$ \\
\addlinespace
\multicolumn{2}{@{}l}{\textit{Selection and scheduling}}\\
Best-belief quantile & $\epsilon=0.05$, five-outcome anchor minimum \\
Checkpoint schedule & power-of-two ($\rho=2$) \\
UCB-Air expansion exponent & $\alpha=0.6$ (inherited, \citealp{wang2025huxley}) \\
Expansion gate & expand when $N_t^{\alpha}\geq|\mathcal{T}_t|$ \\
Exploration--exploitation scheduler & $B/b$ ($b$ = remaining budget) \\
\addlinespace
\multicolumn{2}{@{}l}{\textit{Data splits}}\\
Test sets (\apres / \imogradbench / \crave) & $100$ items each \\
\polyglot train split & $10$ items \\
\polyglot validation split & $49$ items \\
\polyglot test split & $166$ items \\
\addlinespace
\multicolumn{2}{@{}l}{\textit{Budget caps}}\\
Expand cap & \$25, $1{,}200$\,s \\
Train cap & \$8, $900$\,s \\
Validation cap & \$25, $1{,}200$\,s \\
\bottomrule
\end{tabular}
\end{table}

\begin{table}[ht]
\centering
\caption{\textbf{Meta-agent-editable surface and its initial values.} Every setting below is exposed to the meta-agent, which may modify it during search.}
\label{tab:settings-editable}
\small
\begin{tabular}{@{}ll@{}}
\toprule
Setting & Initial value \\
\midrule
Output tokens per call & $32{,}768$  \\
Timeout, run & $86{,}400$\,s \\
Timeout, eval & $1{,}200$\,s \\
Timeout, LLM & $300$\,s \\
Timeout, shell & $120$\,s \\
Coder tool calls per task & $\leq 16$ \\
Meta-agent tools & persistent bash ($120$\,s); editor; delegation \\
Meta-agent tool calls per turn & $\leq 40$ \\
\bottomrule
\end{tabular}
\end{table}

\subsection{Held-Out Evaluation Protocols}
\label{app:domain-protocols}

\noindent\textbf{Coding / \polyglot.}~~Generated patches are checked by execution on $166$ held-out tasks. \Cref{tab:raw-counts} carries the counts.

\noindent\textbf{Paper writing / review.}~~Reviewer slots are anchored to \apres conference decisions; the first epoch trains the reviewer on \apres decisions and freezes it as the writer's search-time critic, and replacement itself stays anchored to \apres alone. Reviewer best-belief agents are scored on $100$ held-out \apres papers for accuracy and acceptance~(\cref{tab:raw-counts}); writer best-belief agents are scored on $100$ generated papers per writer by the fixed panel of four reviewers in \cref{tab:paper-writer-cross-reviewer}.

\noindent\textbf{IMO grading / proof writing.}~~Grader slots are anchored to \imogradbench human grades; grader best-belief agents are scored on $100$ held-out items for exact match and normalized mean absolute error. Provers write proofs for $20$ IMO problems each, scored by the fixed panel of three graders, with every per-grader cell in \cref{tab:proof-per-grader}. \texttt{Pass@6} counts proofs scoring at least $6$ of $7$ points; \texttt{Pass@7} requires full credit and is a disclosed trade-off against the static baseline~(\cref{sec:experiments}).

\noindent\textbf{Uncertainty estimation.}~~Uncertainties in the main-text tables pool raw outcomes. For the writer panel~(\cref{tab:paper-writer-cross-reviewer}), each reviewer cell reports the $95\%$ Jeffreys half-width over its $N=100$ paper outcomes, and the Mean column pools all $N=400$ paper $\times$ reviewer outcomes for its own Jeffreys half-width. For the prover panel~(\cref{tab:imo-proof-cross-grader}), each cell pools the $N=60$ raw outcomes ($20$ problems $\times$ $3$ graders): Score ($0$--$7$) is mean $\pm$ s.e.m., and \texttt{Pass@6} / \texttt{Pass@7} are pooled pass rates $\pm$ s.e.m.\ ($\sqrt{\hat p(1-\hat p)/N}$). The writer-panel cells are thus $95\%$ intervals and the prover-panel cells a single standard error (roughly $68\%$ coverage), so their widths are not directly comparable across the two panels.

\noindent\textbf{Cost-reduction ablation.}~~The \nemotronultra ablation~(\cref{sec:reducing_search_costs}) reuses the paper-writing/review protocol of \cref{app:domain-protocols}. The search-time task-agent calls use \nemotronultra, while final \apres reviewer endpoints are scored with \gptlow (low), using the same held-out $N=100$ \apres set. Agent selection uses $\epsilon$-best-belief with $\epsilon=0.05$. Search cost in \cref{fig:nemotron-cost-shift} is reported as \gptlow-price-equivalent blended tokens. We charge expansion to \gptlow and search-time task-agent evaluation calls to \nemotronultra. We use \$$5$/M input and \$$30$/M output tokens for \gptlow~\citep{openai2026pricing}, and \$$0.5$/M input and \$$2.2$/M output tokens for \nemotronultra from \texttt{DeepInfra} pricing~\citep{deepinfra2026pricing}. Similar to the observed cost split of \cref{fig:analysis-cost}, approximately $20\%$ of search cost is expansion and is charged at \gptlow prices, while $80\%$ is repeated task-agent evaluation. The repeated-evaluation calls are input-heavy, so they are accounted for primarily using the conservative input-price ratio $0.5/5=0.10$; the output-price ratio is smaller ($2.2/30=0.073$). Thus a \nemotronultra blended-token total $B$ is converted to roughly $0.20B + 0.80(0.10B)=0.28B$ \gptlow-price-equivalent blended tokens. This gives a $1.87$M \gptlow-price-equivalent cost for the selected \nemotronultra reviewer agent ($B=6.68$M).

\subsection{Initial Agents and Prompts}
\label{app:prompts}

The initial workspace is identical across all experiments; every task agent began as the minimal template below plus a short per-domain output format. The meta-agent instruction states explicitly that utility is computed on a held-out validation set it never sees and that memorized training answers do not transfer to it, so the visible train feedback of \cref{sec:method:train} informs self-modification without being the selection target. The only engineered prompts in the study belong to the control runs: the frozen \sakana reviewer and \proofautograder. Every evaluator behavior reported beyond these formats (checklists, rubric, acceptance standard) was produced by the search~(\cref{app:evaluator-evolution}).

Every task agent is one LLM call from a shared template with no role-specific code.

\begin{promptbox}{Shared task-agent template (all roles)}
\begin{PromptVerb}
You are an agent.

Task input:
```
{inputs}
```

{output_format}
\end{PromptVerb}
\end{promptbox}

The per-domain output formats append a short schema to this template. The paper reviewer is provided below as an example; the writer, IMO grader and prover, and code reviewer differ only in the schema they request, and the coder is the one role with tool access in the initial configuration (bash, editor, and model delegation; the initial workspace allows at most $16$ tool calls per task, a budget the search later raised on both winning chains, \cref{tab:chains-polyglot}). An unparseable output counts as a failure.

\begin{promptbox}{Initial output format: paper reviewer (also scores writer outputs and the adversarial pool)}
\begin{PromptVerb}
You are reviewing the paper above for a top ML venue. Read it carefully
and decide whether to Accept or Reject it.

Respond with one JSON block:
{
  "Summary": "...", "Strengths": [...], "Weaknesses": [...],
  "Decision": "Accept" | "Reject"
}
For "Decision", use only "Accept" or "Reject".
\end{PromptVerb}
\end{promptbox}

The self-modification proposer receives the following instruction verbatim; \texttt{\{repo\_path\}}, \texttt{\{iterations\_left\}}, and \texttt{\{eval\_path\}} are filled by the harness.

\begin{promptbox}{The meta-agent instruction (verbatim, all headline runs)}
\begin{PromptVerb}
Modify any part of the codebase at `{repo_path}` to improve performance
on the active evaluation domains. This includes your OWN code -- the
meta-agent's loop, prompts, tool surface, and the task-agent harness.
Improving your own self-improvement abilities compounds across future
generations and may lead to better outcomes in the long run than only
optimizing the per-domain task agents.

You have {iterations_left} expansions remaining.

Available LLM models for one-shot delegation are listed in
`{repo_path}lineage/MODEL_CATALOG.md`. Use the `query_model` tool to
fire a one-shot call to a different model with
(prompt, id, effort, max_tokens) -- useful for offloading bulk text
generation or sanity-checking your reasoning against a different model.
The call has no chat history and no tools; it's tokens in, tokens out.

Live remaining budget for this expansion is prepended to every chat
turn by the host harness. The host harness/proxy is the only budget
authority; do not infer spend from local transport-library estimates,
token estimates, or model-provider defaults. Read the current remaining
budget before each action, including tool use and one-shot model
delegation, and scale the number and size of calls to fit it.

Prior generation artifacts are at `{eval_path}`. Each ancestor
`gen_<id>/` contains:
  - `<domain>_eval/predictions.csv` -- task-agent outputs paired with
ground-truth labels on the TRAIN split. The only sample-level outcome
data you can see directly.
  - `agent_output/meta_agent_chat_history.md` -- the full transcript of
that predecessor's reasoning. The highest-signal artifact: what was
tried, what worked, what failed. Future generations will read YOUR chat
history, so explain your reasoning clearly.
  - `agent_output/model_patch.diff` -- the code edit the predecessor's
meta-agent produced. Absent on `gen_initial/` (no agent edits there).
  - `metadata.json` -- parent_genid, parent_agent_success, lineage info.

CRITICAL: your utility is computed on a HELD-OUT validation set you
NEVER see. The train predictions you can read are feedback for study
only. Do NOT hardcode answers, memorize specific question ids, or
otherwise overfit to the train samples -- those tricks score perfectly
on train and 0 on validation, which is what actually drives selection.

Outcomes are binary (1=correct, 0=incorrect) and a node's utility is
the mean of its validation outcomes per (role, task). The evaluator is
external -- improving your agents to genuinely produce better outputs
is the path forward; gaming the parser or breaking the harness scores 0.

Each `<domain>_eval/report.json` lists `question_ids_passed` and
`question_ids_failed`. A failed qid means either the LLM's output
couldn't be parsed (counts as outcome=0 by convention) or it parsed but
predicted the wrong answer; both are visible in `predictions.csv`
row-by-row.
\end{PromptVerb}
\end{promptbox}

The meta-agent's runtime constants are in \cref{tab:settings-editable}; each expansion's lease is announced in a live budget header prepended to every turn, and train and validation calls carry their own leases.

\section{Results and Ablations}
\label{app:extended-results}

\subsection{Raw Held-Out Counts}
\label{app:raw-counts}

\begin{table}[H]
\centering
\caption{\textbf{Raw held-out counts at selection} behind \cref{sec:exp-polyglot,sec:exp-writer}.}
\label{tab:raw-counts}
\small
\begin{tabular}{l S[table-format=3.0,round-mode=none] S[table-format=2.0,round-mode=none]}
\toprule
Reviewer & {Accuracy} & {Accept.} \\
\midrule
\hgmh specialist (node 51) & 88 & 45 \\
\ourmethod specialist (node 49) & 84 & 40 \\
\ourmethod generalist, pre-repl.\ (node 78) & 84 & 42 \\
\ourmethod adversarial, post-repl.\ (node 31) & 80 & 32 \\
\dgmh (published) & 73 & 25 \\
\sakana prompt & 63 & 13 \\
\bottomrule
\end{tabular}

\vspace{0.6em}
{\itshape Reviewers: $n=100$ held-out \apres papers.}

\vspace{0.9em}
\begin{tabular}{l S[table-format=3.0,round-mode=none]}
\toprule
Best-belief agent & {Passes} \\
\midrule
\ourmethod specialist (node 104) & 119 \\
\ourmethod generalist (node 100) & 119 \\
\hgmh (node 95) & 116 \\
\bottomrule
\end{tabular}

\vspace{0.6em}
{\itshape Coders: $n=166$ held-out \polyglot tasks, eval-matched protocol.}
\end{table}

\begin{widetable}[H]
\centering
\small
\setlength{\tabcolsep}{3.2pt}
\caption{\textbf{Per-grader proof scores behind \cref{tab:imo-proof-cross-grader}.}}
\label{tab:proof-per-grader}
\begin{tabular}{l *{9}{S[table-format=2.1,round-mode=none]}}
\toprule
& \multicolumn{3}{c}{\proofautograder} & \multicolumn{3}{c}{\hgmh grader (node 45)} & \multicolumn{3}{c}{\ourmethod grader (node 22)} \\
\cmidrule(lr){2-4}\cmidrule(lr){5-7}\cmidrule(lr){8-10}
Prover & {Mean} & {P@6} & {P@7} & {Mean} & {P@6} & {P@7} & {Mean} & {P@6} & {P@7} \\
\midrule
Static \imocode prover & 4.05 & 55.0 & 55.0 & 4.10 & 55.0 & 55.0 & 4.05 & 55.0 & 55.0 \\
\hgmh prover & 3.55 & 50.0 & 45.0 & 3.65 & 50.0 & 45.0 & 4.00 & 55.0 & 45.0 \\
\ourmethod prover (generalist) & 3.55 & 50.0 & 45.0 & 3.70 & 50.0 & 45.0 & 3.95 & 55.0 & 45.0 \\
\bfseries \ourmethod prover (specialist) & \bfseries 3.85 & \bfseries 55.0 & \bfseries 40.0 & \bfseries 4.20 & \bfseries 60.0 & \bfseries 45.0 & \bfseries 4.95 & \bfseries 70.0 & \bfseries 60.0 \\
\bottomrule
\end{tabular}
\end{widetable}
\FloatBarrier
\subsection{Mechanism Ablations}
\label{app:nemotron-ablations}

Below, we provide ablations that isolate components of controlled utility evolution, replacement, the adversarial pool, and selective erasure, each turned off in a paper-review run~(\cref{tab:nemotron-runs}). Every mechanism and hyperparameter is held fixed; only the underlying model changes, to \nemotron. Compared to the \gptlow runs, the smaller context window of \nemotron triggers frequent compaction. Furthermore, its replacement decisions partly reflect reliability fixes rather than review logic~(\cref{app:analyses:patches}). 

\noindent\textbf{The fixed critic saturates.}~~With replacement off the writer ends at $75/75$ accepted validation papers, trivially raising acceptance by the frozen critic past the human acceptance rate. 

\noindent\textbf{Replacement alone (adversarial pool off) escapes that plateau.}~~Ending at an anchor result of $52/57$; the writer re-bases after each replacement instead of saturating a moving judge, so replacement suffices for progress while the adversarial pool shapes \emph{which} reviewer is selected~(\cref{sec:exp-adversarial}). 

\noindent\textbf{Without erasure the utility stays stale.}~~Retired-critic rows remain in the official utility; the slot oscillates through an A$\to$B$\to$A return of a displaced critic; evaluator replacements stop triggering as stale evidence accumulates, driving the search to explore the same nodes repeatedly.

\noindent\textbf{Adversarial pool matches replacement-only at search time.}~~The adversarial pool primarily serves to better delineate the decision boundary between human- and \AI-generated text, thus it primarily impacts downstream reviewer evaluations on writer-generated text.

\begin{widetable}[H]
\centering
\caption{\textbf{Mechanism ablations on the evaluator-independent anchor.} All runs use a $12{,}288$-evaluation budget. Reviewer anchor accuracy is comparable across runs, while writer acceptance ($\dagger$), scored by each run's own critic, is not. Cells report mean $\pm$ 95\% Jeffreys interval.}
\label{tab:nemotron-runs}
\small
\setlength{\tabcolsep}{4.5pt}
\begin{tabular}{l c c c c c}
\toprule
Run & Repl. & Pool & Erase & Writer val.\ acc.\ (\%)$^{\dagger}$ & Reviewer anchor acc.\ (\%) \\
\midrule
Full mechanism & on & on & on & $90.1$\,{\color{black!55}\scriptsize$\pm 3.5$} & $95.0$\,{\color{black!55}\scriptsize$\pm 10.3$} \\
Replacement only & on & off & on & $89.5$\,{\color{black!55}\scriptsize$\pm 13.7$} & $91.2$\,{\color{black!55}\scriptsize$\pm 7.4$} \\
No erasure & on & on & off & $90.0$\,{\color{black!55}\scriptsize$\pm 10.7$} & $92.9$\,{\color{black!55}\scriptsize$\pm 10.8$} \\
\addlinespace
Fixed critic & off & off & {--} & $100.0$\,{\color{black!55}\scriptsize$\pm 1.7$} & $78.2$\,{\color{black!55}\scriptsize$\pm 4.4$} \\
\bottomrule
\end{tabular}
\end{widetable}

\subsection{Where Cost and Progress Come From}
\label{app:analyses}

We now present how the actual search process evolved in terms of compute allocation and edits.

\subsubsection{Cost decomposition}
\label{app:analyses:budget}

Every model call is related to workspace expansion, train-time evaluation, or validation evaluation. The overall run cost is dominated by validation: across the three main-experiment runs, validation carries roughly $65$--$69\%$ of blended tokens, expansion $18$--$23\%$, and train-time evaluation (at three train samples per node) $12$--$14\%$~(\cref{fig:analysis-cost}). Allocating more train samples to each meta-agent may allow for more efficient search due to more context, thus its possible that the current share of tokens going to train-time evaluation is too low.

\begin{figure}[H]
\centering
\includegraphics[width=\linewidth]{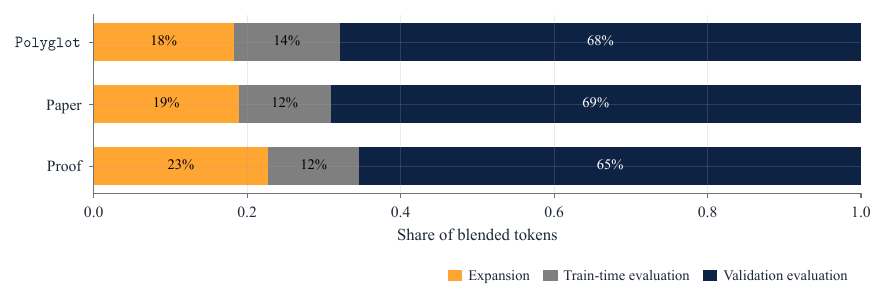}
\caption{\textbf{Blended-token cost decomposition of the three headline \gptlow (low) runs} into workspace expansion, train-time evaluation, and validation evaluation.}
\label{fig:analysis-cost}
\end{figure}

\subsubsection{Patch surfaces}
\label{app:analyses:patches}

We classify every lineage edge's patch by the code surface it modifies. Shared surfaces, the task-agent code and infrastructure used by more than one role, carry $59$--$90\%$ of accepted patches in every \gptlow run~(\cref{fig:analysis-transfer}). For example, the \polyglot run concentrates its improvements on shared and infrastructure surfaces that generalize across roles. We posit this is because the meta-agent realizes that role-specific patches each drive a single utility and generalize less. The \nemotron run is the outlier as it heavily targets the meta-agent's instruction module ($42\%$ of edges), potentially to improve its reliability.

\begin{figure}[H]
\centering
\includegraphics[width=\linewidth]{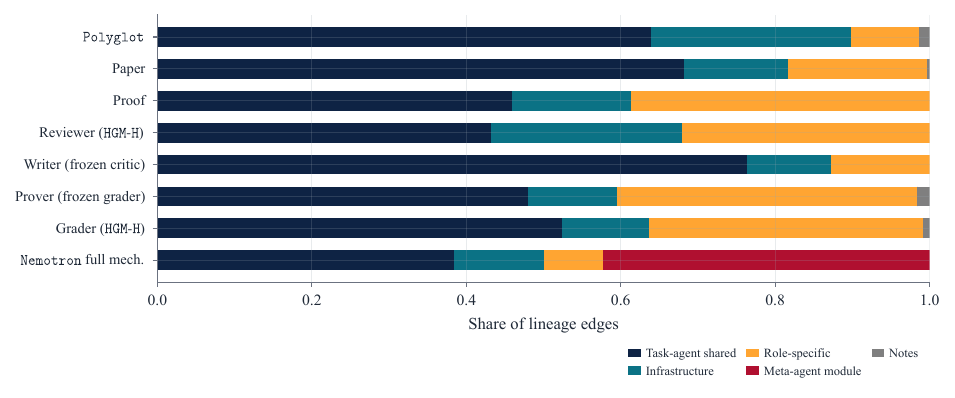}
\caption{\textbf{Patch surfaces by run}, classifying each accepted lineage edge by the code surface it modifies (task-agent shared, infrastructure, role-specific, meta-agent module, or notes).}
\label{fig:analysis-transfer}
\end{figure}

\section{What the Co-Evolution Discovered}
\label{app:evaluator-evolution}

Every evaluator role is a prompt entry in one shared workspace file, so each evaluator's behavioral history reads directly from its lineage. The initial role prompts are minimal~(\cref{app:prompts}); across every role, the search's most common move rewrites a vague one-line instruction into a specific, criterion-bearing rubric, and the rules it produced are quoted below. Because the starting prompts are minimal, a stronger initial prompt would raise every method's scores together without changing the co-evolution comparison, which holds each method to the same starting point.

\subsection{What the Evolved Evaluators Changed}
\label{app:evaluator-evolution:reviewer}

The adversarial reviewer's lineage rewrites the one-line initial instruction once into a conjunctive acceptance standard, while the pre-replacement generalist carries a two-sided calibration.

\begin{promptbox}{Evolved rule: the adversarial reviewer's conjunctive standard (gen\,4)}
\begin{PromptVerb}
... Use a realistic selective-conference standard: accept only if the
paper has a clear nontrivial contribution, technically sound
methodology/theory or experiments, adequate positioning against prior
work, and claims supported by evidence; reject papers with missing
validation, unclear novelty, serious technical gaps, or overclaiming.
Do not be swayed by fluent writing alone.
\end{PromptVerb}
\end{promptbox}

\begin{promptbox}{Evolved rule: the displaced generalist's two-sided calibration}
\begin{PromptVerb}
... Calibrate the decision holistically rather than requiring
perfection: a paper with a coherent contribution and adequate evidence
should be accepted despite presentation flaws, while impressive writing
without substantiated experiments or technical substance should be
rejected.
\end{PromptVerb}
\end{promptbox}

Both wordings appeared early in their lineages; the adversarial pool changed which lineage was selected to reject a greater fraction of \AI-generated papers.

\subsection{The Co-Evolved Grader}
\label{app:evaluator-evolution:grader}

The grader reached its reported form in exactly two patches, both quoted here; the initial instruction assigns one of four labels with no rubric. The first patch introduces the persona, the two-step milestone procedure, the points mapping, and a strictness rule, and the same patch introduces the prover prompt, so the prover and grader entered the workspace together. A later patch removes the blanket strictness while keeping the milestone procedure and points mapping.

\begin{promptbox}{Evolved rule: the grader's first rewrite (gen\,2)}
\begin{PromptVerb}
You are an expert, conservative IMO proof grader. ... First identify
the key required steps from the guidelines; then check whether the
student's text actually proves them rigorously, ignoring unsupported
claims, handwaving, and statements merely resembling the reference.

Use this mapping unless the problem-specific guidelines say otherwise:
- correct: a complete rigorous solution (7/7).
- almost: essentially complete with only a minor local gap or
  arithmetic slip (6/7); do NOT use for missing major lemmas.
- partial: substantial guideline-listed progress, but not a
  near-complete proof (1/7).
- incorrect: no substantial credited progress or a fundamentally
  flawed argument (0/7).

Be especially strict with advanced geometry/combinatorics solutions
that invoke large unproved lemmas ...
\end{PromptVerb}
\end{promptbox}

\begin{promptbox}{Evolved rule: the co-evolved grader's recalibration (gen\,22)}
\begin{PromptVerb}
Calibrate generously but not naively: do not require the student's
proof to match the reference solution, and do not downgrade merely
because a standard theorem/lemma is cited tersely, the exposition is
compressed, or a repairable statement is slightly misphrased. ...
Downgrade to partial/incorrect only for a genuinely fatal gap, a false
central claim with no reasonable repair, or failure to prove a
guideline milestone.
\end{PromptVerb}
\end{promptbox}

Its behavior in the cross-grader panel of \cref{tab:imo-proof-cross-grader} follows from these patches.

\subsection{The \polyglot Code Reviewer}
\label{app:evaluator-evolution:critic}

The code-review slot was replaced three times; the initial instruction asks for a pass/fail verdict on general code quality, and the three incumbents' rules are quoted below. The third branches from an earlier ancestor than the second, so it does not inherit the second's bullets. Across the three replacements, the slot moves toward evidence-bounded review: each incumbent narrows what counts as a failure to defects in the diff itself.

\begin{promptbox}{Evolved rules: the three promoted code reviewers}
\begin{PromptVerb}
Node 19 (gen 5): You are a pragmatic senior maintainer reviewing a
proposed patch. ...
- PASS when the patch plausibly fixes/adds the intended behavior, is
  localized, and has no clear correctness, compilation, security, or
  data-loss defect. ...
- FAIL only for a concrete blocker: obvious wrong behavior, broken API
  usage, compile/type errors visible in the diff, missing essential
  cases, flaky/meaningless tests, ...
- ... If evidence is balanced, prefer pass.

Node 34 (gen 34, appends): - Review in this order: (1) infer the
intended user-visible behavior ..., (2) check whether production code
implements it, (3) look for definite regressions or syntax/type
errors, (4) only then consider style, test coverage, and
maintainability. Style concerns without a concrete correctness risk
should not cause fail. ... FAIL only when you can point to a specific
scenario that remains broken ...

Node 86 (gen 59, extends the blocker rule): ... Do not invent blockers
from missing project context; the diff alone must show the defect.
- PASS broad but coherent mechanical migrations, CI/build-artifact
  pipeline updates, ... and test-only fixes when they are internally
  consistent ...
- For security/credential-looking files, fail only for an actual
  exposed secret or dangerous permission; ...
\end{PromptVerb}
\end{promptbox}

\subsection{Winning Lineages}
\label{app:lineages}

\begin{widefigure}[t]
    \centering
    \includegraphics[width=\linewidth]{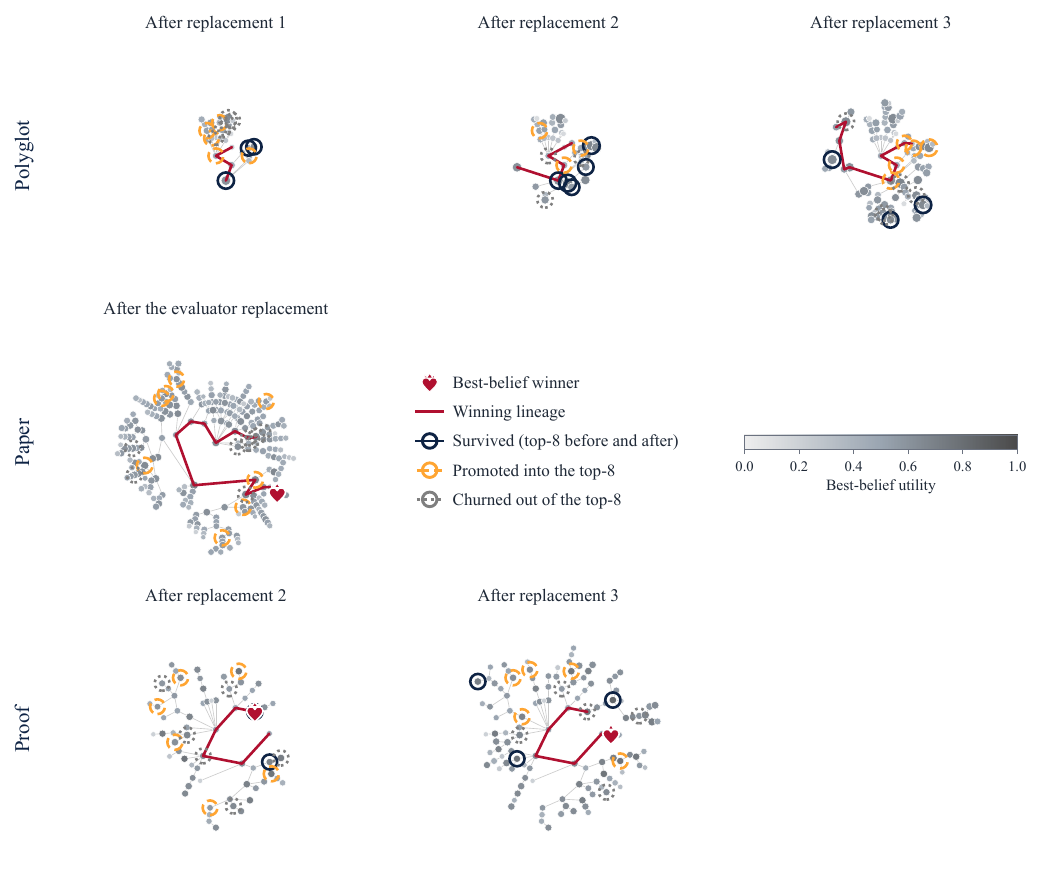}
    \caption{\textbf{What each evaluator replacement keeps versus what it re-ranks, in all three headline domains.} Rows are domains (top to bottom: \polyglot, paper, proof); columns are that run's shown evaluator replacements, each drawing the archive as it stood at that replacement (radial layout, node color is best-belief utility and size is evidence count). Among the top-$8$ leaderboard nodes at each replacement, navy rings survived the re-ranking, amber rings are lineages the new criterion promotes into the top-$8$, and gray rings are churned out. Replacements with fewer than $20$ ranked nodes are omitted, as in \cref{fig:transition-rerank}, so the proof run's first replacement is not shown.}
    \label{fig:lineage-trees}
\end{widefigure}

We now show full lineages across our runs. Each archive node stores its parent, its cumulative self-modification as a unified diff, and the meta-agent transcript that produced it; \cref{fig:lineage-trees} draws the three archives, \cref{tab:chains-polyglot,tab:chains-paper,tab:chains-proof} list every step, and quoted phrases are from the meta-agents' own summaries. The generalist chain's largest gain is not a prompt change but a code repair, and the same node was later promoted into the code-review slot at the first checkpoint. The adversarial reviewer's standard arrived before any replacement; the adversarial pool selected that lineage at the second epoch~(\cref{app:evaluator-evolution}). 

\begin{widetable}[t]
\centering
\caption{\textbf{\polyglot winning chains; both final best-belief agents score $119/166$ held out.}}
\label{tab:chains-polyglot}
\small
\begin{tabular}{@{}r >{\raggedright\arraybackslash}p{0.84\linewidth}@{}}
\toprule
{Node} & Modification \\
\midrule
\multicolumn{2}{@{}l}{\textit{specialist} (final $119/166$)}\\
4 & coder tool budget $16{\to}40$; review calibration; catalog-path fix \\
35 & coder workflow: one tool call at a time, prefer exact algorithms \\
51 & review balance against blindly passing bad patches \\
104 & parse retry; label normalization; concrete-blocker review rule \\
\midrule
\multicolumn{2}{@{}l}{\textit{generalist} (final $119/166$)}\\
5 & parse retry; tool budget $16{\to}24$; concrete-blocker review \\
19 & untracked-artifact and symlink diff hygiene \\
33 & the same hygiene applied to a second code path \\
42 & one tool call per turn; preserve existing public APIs \\
72 & read problem text and tests first; wider artifact filter \\
78 & final-output reminder; conservative label normalization \\
93 & dedicated artifact-review format; pass plausible localized fixes \\
100 & JSON-serialized tool results (transcript-corruption fix) \\
\bottomrule
\end{tabular}
\end{widetable}

\begin{widetable}[t]
\centering
\caption{\textbf{Paper-domain winning chains.} The writer chain passes through the reported reviewer (49) and generalist (78); the adversarial reviewer (31) descends separately.}
\label{tab:chains-paper}
\small
\begin{tabular}{@{}r >{\raggedright\arraybackslash}p{0.84\linewidth}@{}}
\toprule
{Node} & Modification \\
\midrule
\multicolumn{2}{@{}l}{\textit{writer}}\\
7 & stronger authoring prompt; runtime prompt nudges \\
11 & parse-repair retry; $8{,}192$-token writer output cap \\
17 & present raw paper text instead of the input dictionary \\
37 & remove global LLM monkeypatch \\
49 & writer returns raw text, not JSON-wrapped continuations \\
78 & two-sided review calibration; no acceptance-seeking prose \\
81 & format hygiene in the client; revert import hook that could hang \\
89 & continue from the exact stopping point; longer word target \\
150 & liberal unwrapping of writer output (recovers wrapped papers) \\
154 & \textbf{area-chair review standard}; bounded-evidence writing rules \\
\midrule
\multicolumn{2}{@{}l}{\textit{adversarial}}\\
4 & conjunctive selective-conference review standard \\
25 & writer-side review treated as a viability gate \\
31 & self-contained revised paper; $4{,}096$-token output cap \\
\bottomrule
\end{tabular}
\end{widetable}

\begin{widetable}[t]
\centering
\caption{\textbf{Proof-domain winning chains.} The grader's calibration reversal at node 22 is the largest single-step utility gain in any winning chain.}
\label{tab:chains-proof}
\small
\begin{tabular}{@{}r >{\raggedright\arraybackslash}p{0.84\linewidth}@{}}
\toprule
{Node} & Modification \\
\midrule
\multicolumn{2}{@{}l}{\textit{grader}}\\
2 & conservative rubric-aware grading; first prover prompt \\
22 & calibration reversal: do not downgrade terse or compressed proofs \\
\midrule
\multicolumn{2}{@{}l}{\textit{prover}}\\
3 & parse retry; lemma-justification and case-coverage prompt \\
11 & partial-versus-incorrect distinction; hidden-assumption checks \\
19 & second-pass self-revision with an adversarial checker \\
66 & disable the inherited self-revision (rollout below) \\
\bottomrule
\end{tabular}
\end{widetable}

Two modifications outside the evaluator prompts were the most consequential. The first is node 19's diff-hygiene repair~(\cref{lst:diff-hygiene}); the artifact filter is the load-bearing line, quoted from its patch.

\begin{lstlisting}[style=rsdiff, caption={Node 19's diff-hygiene repair (\polyglot run): untracked dependency artifacts and unsafe symlinks are filtered out of submitted patches.}, label={lst:diff-hygiene}]
diff --git a/utils/git_utils.py b/utils/git_utils.py
@@ def diff_versus_commit(git_dname, commit):
     dependency_artifact_roots = {
         "node_modules", ".npm", ".yarn", ".pnpm-store",
         "target", "build", "dist", "__pycache__", ".pytest_cache",
     }
     def should_include_untracked(relpath):
         parts = relpath.split(os.sep)
         if parts and parts[0] in dependency_artifact_roots:
             return False         # skip incidental install artifacts
         file_path = os.path.abspath(os.path.join(git_dname, relpath))
         if os.path.islink(file_path):
             target = os.path.realpath(file_path)
             # drop symlinks resolving outside the checkout
             if not target.startswith(os.path.abspath(git_dname)):
                 return False
         return True
\end{lstlisting}

The second is the prover's three-line retraction of its ancestor's self-revision feature (node 66, \cref{lst:prover-retraction}).

\begin{lstlisting}[style=rsdiff, caption={Node 66's retraction (proof run): the inherited self-revision pass is disabled by default and the revision instruction is rewritten to preserve sound drafts, on the recorded grounds that revision ``could degrade a correct/near-correct detailed proof into a weaker black-box citation-based proof.''}, label={lst:prover-retraction}]
diff --git a/task_agent.py b/task_agent.py
@@ class TaskAgent(AgentSystem):
-    IMO_PROOF_REVISION_ENABLED = os.environ.get("HYPERAGENTS_IMO_PROOF_REVISION", "1") != "0"
+    IMO_PROOF_REVISION_ENABLED = os.environ.get("HYPERAGENTS_IMO_PROOF_REVISION", "0") == "1"
@@ Draft proof:
-Find any hidden gaps ... If the draft is sound, keep it but make it cleaner.
+Preserve the draft if it is essentially sound; do not replace a detailed proof
+with a shorter answer that relies on an unproved obscure named theorem.
\end{lstlisting}

\section{Theory}
\label{app:proofs}

\subsection{Setting: the \texorpdfstring{\hgm}{HGM} Search Algorithm}
\label{app:prelim}

\ourmethod builds on the Huxley-G\"odel Machine (\hgm)~\citep{wang2025huxley}, which formalizes self-improvement as tree search guided by clade metaproductivity (\cmp), and \hyperagents~(\dgmh)~\citep{zhang2026hyperagents}, which introduces metacognitive self-modification in editable agent workspaces; we review only the assumptions and theory this paper needs, then state the local operating conditions the proofs use.

\subsubsection{Self-Improvement as Tree Search}
\label{sec:prelim:tree}

Following \citet{wang2025huxley}, an archive $\mathcal{T}_t$ of agents forms a tree initialized at $\mathcal{T}_0=\{a_0\}$. At each iteration, the policy either modifies an existing agent or evaluates one:
\[
\begin{gathered}
\mathcal{A}_t = \mathcal{M}_t \cup \mathcal{V}_t,\\
\mathcal{M}_t = \{m_a : a \in \mathcal{T}_t\},
\qquad
\mathcal{V}_t = \{v_a : a \in \mathcal{T}_t\}.
\end{gathered}
\]
A modification action $m_a$ produces a child of $a$; an evaluation action $v_a$ tests $a$ on one task and returns a binary outcome $o\in\{0,1\}$. After a fixed external budget $B$, the system returns the highest-scoring agent in the archive:
\[
a_{\mathrm{final}}
=
\argmax_{a\in\mathcal{T}_B}\operatorname{Score}_\pi(a).
\]
The utility $U(a)$ is the expected binary success rate of agent $a$ under the evaluation protocol. \ourmethod preserves this flat \hgm search algorithm while generalizing what a node contains and how its binary outcomes are produced.

\subsubsection{\texorpdfstring{\hgm}{HGM}-Compatible Operating Conditions}
\label{sec:prelim:assumptions}

These are the local conditions the appendix results need.

\begin{assumption}[\hgm-compatible fixed-criterion search]
\label{assumption:hgm}
For the theoretical analysis of a fixed-criterion self-improvement epoch, assume:
\begin{enumerate}[leftmargin=*,itemsep=0.2em,label=\textbf{C\arabic*.},ref=C\arabic*]
    \item \label{cond:final} The objective is evaluated through the agent selected from the archive after the search budget is spent; intermediate observations guide search but are not separate rewards.
    \item \label{cond:stationary} For any fixed node--task pair under the active criterion, repeated evaluations produce time-homogeneous binary outcomes.
    \item \label{cond:proofs} Any oracle or proof object used by the ideal \hgm theorem is outside the empirical search-evaluation budget.
    \item \label{cond:budget} The empirical scheduler accounts for expansion and evaluation through the externally fixed budget units used by the run.
\end{enumerate}
\end{assumption}

The key condition for \ourmethod is stationarity: \hgm assumes a fixed evaluator, so an early outcome means the same as a late one. \ourmethod preserves the property locally: \cref{sec:method:generative} freezes learned evaluators within an epoch, and \cref{sec:method:utility} permits changes only at checkpoints, followed by selective erasure of records whose meaning depended on the displaced evaluator.

\subsubsection{Clade-Level Metaproductivity}
\label{sec:prelim:cmp}

\citet{wang2025huxley} show that a node's immediate benchmark performance may not predict long-run self-improvement value: a weak node may produce strong descendants. \hgm replaces the G\"odel Machine's intractable proof search over future modifications with archive search over observed descendant productivity, scoring a node on its \emph{clade}, the subtree $C(\mathcal{T},a)$ of $a$ and its descendants, written $C(a)$ when the tree is clear.

\begin{theorem}[\citet{wang2025huxley}]
\label{thm:cmp}
Under \cref{assumption:hgm}, access to the \cmp oracle is sufficient to implement the G\"odel Machine.
\end{theorem}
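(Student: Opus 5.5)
The plan is to reduce the statement to the Gödel Machine characterization used by \citet{wang2025huxley}. The Gödel Machine performs a self-modification exactly when it can prove that the modification increases the expected utility of the final returned agent. Under \cref{assumption:hgm}, that utility is the expected success rate of the agent selected from the archive once the budget is spent (\ref{cond:final}). So the target is to show that the value of a proposed modification $m_a$ is a functional of the \cmp of the clade it creates. If that holds, a \cmp oracle can stand in for the proof search.

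\textbf{Step 1: make the objective a fixed function of the search state.} Under the active criterion, each node--task outcome is a time-homogeneous Bernoulli variable (\ref{cond:stationary}). Hence $U(a)$ is well defined and does not depend on when $a$ is evaluated. Proof objects and oracle calls are charged outside the evaluation budget (\ref{cond:proofs}), and expansions and evaluations are accounted in fixed budget units (\ref{cond:budget}). With these, the remaining-budget decision problem becomes a finite-horizon MDP over archives $\mathcal{T}_t$ whose terminal reward is $\max_{a\in\mathcal{T}_B}U(a)$ under the selection rule.

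\textbf{Step 2: show that the value of $m_a$ is captured by the clade.} Taking $m_a$ changes the terminal reward only through agents that descend from the new child. That is, it acts through the clade $C(a')$ and, recursively, through $C(a)$. I would define \cmp as the expected best (or pooled) utility reachable within a node's subtree under continued optimal search. With that definition, the Bellman optimality comparison between ``modify $a$'' and any other action reduces to comparing \cmp values of the clades involved. The Gödel Machine's proof-gated rule, ``execute $m_a$ iff it provably beats the alternative,'' is then exactly the rule ``execute $m_a$ iff the oracle \cmp is maximal at $a$.'' Choosing the maximizer therefore reproduces the Gödel Machine's accepted modifications, and in particular its global optimality guarantee. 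For this reason the step only needs to establish action-by-action equivalence.

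\textbf{Main obstacle: stating \cmp so that it is exactly the continuation value.} \cmp has to equal the optimal continuation value, not merely the empirical pooled ratio $\widehat{\cmp}$. Two points need care. First, the oracle definition must use the same budget-limited policy, so that the argument is not circular; I would handle this by backward induction on the remaining budget. Second, nodes outside $C(a)$ must be shown not to interact with the value of modifying $a$, which follows from the tree structure together with additivity of the budget. Stationarity (\ref{cond:stationary}) is what keeps these values fixed across time. That dependence is the reason the result holds only within a fixed-criterion epoch, which is how the later epoch-local propositions will use it.
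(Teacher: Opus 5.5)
The paper does not prove this statement. It imports it from \citet{wang2025huxley} and only checks, in \cref{prop:validity,prop:piecewise-fixed-validity}, that each epoch supplies the fixed-criterion hypothesis; your closing remark correctly anticipates this. Your reconstruction, however, has a genuine gap in Step 2. With terminal reward $\max_{x\in\mathcal{T}_B}U(x)$ taken over the whole archive, it is false that nodes outside $C(a)$ do not interact with the value of $m_a$. The maximum couples each clade to its complement. Budget additivity adds a further coupling, since budget spent in one clade is unavailable to the others, rather than removing one. An expected maximum does not decompose by clade, so comparing clade values does not reproduce the Bellman comparison.

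Concretely, suppose one modification is left and the final selection identifies the best agent, as your terminal reward assumes. The archive already contains $b$ with $U(b)=0.9$, and nodes $x,y$ with $U(x)=U(y)=0$. Modifying $x$ yields a child of utility $0.85$. Modifying $y$ yields a child of utility $1.0$ or $0.6$ with probability $1/2$ each. Scoring each modification by the expected best utility in the clade it creates ranks $m_x$ above $m_y$ ($0.85>0.8$). The archive-level values rank $m_y$ above $m_x$ ($0.95>0.9$). The incumbent $b$, outside both clades, is exactly what makes the variance of $y$ valuable.

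If you instead read CMP as the clade value under continued optimal search over all nodes, the rule degenerates. Since $C(a)\subseteq C(\mathrm{parent}(a))$, a clade maximum under a common policy is non-decreasing toward the root, so ``CMP maximal at $a$'' always (weakly) selects the root.

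The clade restriction therefore cannot be derived from tree structure and budget additivity. It has to enter as an explicit modelling assumption about the G\"odel Machine's decision. One example: a self-rewrite commits every later agent, including the returned one, to descendants of the new self, and a rejected rewrite is discarded. Under such a model the value of a switch depends on a single clade by construction. The proof-gated switch rule then becomes a comparison between the candidate's clade value and the value of keeping the current self. You need to state such a model, or the assumption of the cited result that plays this role, and prove the equivalence there. Otherwise the only quantity that makes your Bellman comparison exact is the archive-level action value itself, and the clade does no work.

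Also drop ``or pooled.'' A pooled success ratio, which is what $\widehat{\mathrm{CMP}}$ estimates, is a clade average rather than a continuation value. The equivalence fails for it even with exact knowledge: a clade with one excellent descendant among many poor ones can score below a uniformly mediocre clade whose best is lower.
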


In practice, \hgm estimates clade-level metaproductivity by pooling binary outcomes over the clade:
\[
\begin{gathered}
\widehat{\cmp}(a)=\frac{n_{\mathrm{success}}^C(a)}{n_{\mathrm{success}}^C(a)+n_{\mathrm{failure}}^C(a)},\\
n_{\mathrm{success}}^C(a)=\textstyle\sum_{a'\in C(a)} n_{\mathrm{success}}(a'),\\
n_{\mathrm{failure}}^C(a)=\textstyle\sum_{a'\in C(a)} n_{\mathrm{failure}}(a').
\end{gathered}
\]
Thompson sampling over the clade-level success and failure counts trades exploration against exploitation, and the UCB-Air gate~\citep{NIPS200849ae49a2} controls tree growth. \ourmethod inherits these mechanisms unchanged; the property the proofs rely on is that the scheduler consumes a flat stream of binary outcomes per node, aggregated over the node's clade. The node representation is the metacognitive workspace of \hyperagents~\citep{zhang2026hyperagents}: each archive node is a shared workspace $w_a$ with $K$ role agents plus a meta-agent that can edit the node's code and coordination, while \hgm search decides which lineage receives expansion and evaluation budget~(\cref{app:fm}).

\subsection{One Rule for Replacement and Selection}
\label{app:controlled_utility_details}

Evaluator slots are frozen within an epoch, challengers are selected by $\epsilon$-best-belief score on anchor evidence, and the promoted evaluator is frozen for the next epoch~(\cref{sec:method:utility}).

\subsubsection{Evaluator replacement rule}
\label{app:swap-rule}

At checkpoint $c$, fix an evaluator slot $m$. Let $e_m^{(j_m)}$ be the incumbent frozen evaluator and let $\mathcal C_m(c)$ be the challenger evaluators with the required evaluator-independent anchor evidence on $\mathrm{GT}_m$. Define
\[
\mathcal E_m(c)
=
\{e_m^{(j_m)}\}\cup\mathcal C_m(c).
\]
For each candidate $e\in\mathcal E_m(c)$, let $S^{\mathrm{gt}}_e$ and $F^{\mathrm{gt}}_e$ be its successes/failures on the anchor. Compute
\[
BB_\epsilon(e)
=
I^{-1}_{\epsilon}\!\left(1+S^{\mathrm{gt}}_e,\,1+F^{\mathrm{gt}}_e\right),
\]
the same $\epsilon$-best-belief score used for final agent selection~(\cref{sec:method:prelim}). Because $BB_\epsilon(e)$ is the $\epsilon$-quantile of the candidate's Beta posterior over anchor outcomes, it is a conservative estimate of the candidate's true accuracy that the candidate exceeds with probability $1-\epsilon$. The selected evaluator is
\[
e_m^\star
\in
\argmax_{e\in\mathcal E_m(c)} BB_\epsilon(e),
\]
with ties favoring the incumbent; if $e_m^\star=e_m^{(j_m)}$, no transition occurs.

\subsection{Working-Posterior Consistency}
\label{app:proofs:role_task_balance}
Fix an epoch vector \(\boldsymbol{j}\) and a node \(a\). Let
\[
\mathcal C_{\boldsymbol{j}}(a)
=
\{(r,d): r\in \mathcal{R}_{\mathrm{elig}}(a,\boldsymbol{j}),\ d\in \mathcal{D}_{r,\mathrm{elig}}(a,\boldsymbol{j})\}
\]
denote the finite set of eligible role--task cells for node \(a\) during epoch \(\boldsymbol{j}\).
For a cell \(c=(r,d)\in \mathcal C_{\boldsymbol{j}}(a)\), let \(p_{c,\boldsymbol{j}}(a)\) be the fixed success
probability of one search evaluation of node \(a\) on role \(r\) and task \(d\)
under the active evaluation criteria of epoch \(\boldsymbol{j}\). The default role/task-balanced target assigns
\[
w_{r,d}
=
\frac{1}{|\mathcal{R}_{\mathrm{elig}}(a,\boldsymbol{j})|}\cdot
\frac{1}{|\mathcal{D}_{r,\mathrm{elig}}(a,\boldsymbol{j})|},
\]
with the obvious renormalization for a smaller eligible set. Generally,
let \(w_c\ge 0\) be weights satisfying
\[
\sum_{c\in \mathcal C_{\boldsymbol{j}}(a)} w_c = 1,
\]
with \(w_c\ge 0\)~(see \cref{prop:validity}).
The corresponding epoch-local role--task balanced utility is
\[
U_{\boldsymbol{j}}(a)
=
\sum_{c\in \mathcal C_{\boldsymbol{j}}(a)} w_c\,p_{c,\boldsymbol{j}}(a).
\]

\begin{proposition}[Consistency of the \hgm working posterior under balanced role--task sampling]
\label{prop:balanced-working-posterior}
Fix an epoch vector \(\boldsymbol{j}\), a node \(a\), and the eligible role--task cell set
\(\mathcal C_{\boldsymbol{j}}(a)\). Consider the subsequence of search evaluations in which
node \(a\) is selected during this epoch. After \(n\) such evaluations, let
\(n_c(n)\) be the number of evaluations assigned to cell \(c\), and let
\[
S_a(n)=\sum_{t=1}^{n} O_t
\]
be the total number of binary successes recorded for node \(a\).

Assume the within-node scheduler is \(w\)-balanced, that is:
\[
\frac{n_c(n)}{n}\longrightarrow w_c
\qquad\text{for every } c\in \mathcal C_{\boldsymbol{j}}(a).
\]
Assume also that, for each cell \(c\), repeated evaluations of \(a\) on \(c\)
satisfy a cell-wise strong law:
\[
\frac{1}{m}\sum_{i=1}^{m} O_{c,i}
\longrightarrow
p_{c,\boldsymbol{j}}(a)
\qquad\text{a.s. as }m\to\infty,
\]
where \(O_{c,i}\) is the \(i\)-th outcome observed from cell \(c\). This condition
holds, for example, when cell-level outcomes are independent Bernoulli trials
with fixed success probability \(p_{c,\boldsymbol{j}}(a)\), or under any stationary
martingale-difference model satisfying the strong law.

Define the \hgm-compatible working posterior
\[
Q_n(a)
=
\operatorname{Beta}\bigl(1+S_a(n),\,1+n-S_a(n)\bigr).
\]
Its mean is
\[
M_n(a)
=
\frac{1+S_a(n)}{2+n}.
\]
Then
\[
M_n(a)
\longrightarrow
U_{\boldsymbol{j}}(a)
=
\sum_{c\in \mathcal C_{\boldsymbol{j}}(a)} w_c\,p_{c,\boldsymbol{j}}(a)
\qquad\text{a.s.}
\]

Thus the pooled Beta accumulator preserves the correct role--task balanced
posterior mean target asymptotically. \(Q_n(a)\) is a working posterior used to retain the flat \hgm success/failure interface for Thompson sampling.
\end{proposition}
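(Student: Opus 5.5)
The plan is to decompose the pooled success count by cell and apply the two hypotheses cell by cell. Since each of the \(n\) evaluations of node \(a\) in this epoch is assigned to exactly one cell, we can write
\[
S_a(n)=\sum_{c\in\mathcal C_{\boldsymbol{j}}(a)}\ \sum_{i=1}^{n_c(n)} O_{c,i},
\qquad
\frac{S_a(n)}{n}=\sum_{c\in\mathcal C_{\boldsymbol{j}}(a)} \frac{n_c(n)}{n}\cdot \widehat p_c(n),
\]
where \(\widehat p_c(n)=\frac{1}{n_c(n)}\sum_{i=1}^{n_c(n)}O_{c,i}\) whenever \(n_c(n)\ge 1\). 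The term is simply dropped when \(n_c(n)=0\), since then it contributes zero.

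The key steps, in order, are as follows. First, since \(M_n(a)=\frac{1+S_a(n)}{2+n}\) differs from \(S_a(n)/n\) by a quantity of order \(O(1/n)\) (because \(0\le S_a(n)\le n\)), it suffices to prove \(S_a(n)/n\to U_{\boldsymbol{j}}(a)\) almost surely.

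Second, split the cells into those with \(w_c>0\) and those with \(w_c=0\).

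For a cell with \(w_c>0\), balancedness gives \(n_c(n)\to\infty\). Then \(\widehat p_c(n)\) is a subsequence \(m=n_c(n)\to\infty\) of the cell-wise averages, so on the almost-sure event of the cell-wise strong law we get \(\widehat p_c(n)\to p_{c,\boldsymbol{j}}(a)\). Multiplying by \(n_c(n)/n\to w_c\) yields convergence of the product to \(w_c\,p_{c,\boldsymbol{j}}(a)\).

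For a cell with \(w_c=0\), bound the contribution trivially. Since \(0\le \widehat p_c\le 1\), the term is at most \(n_c(n)/n\to 0\), so no strong law is needed there.

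Third, intersect the finitely many almost-sure events; the intersection of finitely many probability-one events still has probability one. Summing over the finite set \(\mathcal C_{\boldsymbol{j}}(a)\) then gives
\[
\frac{S_a(n)}{n}\to\sum_c w_c\,p_{c,\boldsymbol{j}}(a)=U_{\boldsymbol{j}}(a).
\]

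The main obstacle is justifying that the cell-wise strong law transfers along the random, scheduler-dependent index \(n_c(n)\). The statement is phrased in terms of the \(i\)-th outcome observed from cell \(c\), i.e.\ the cell's own outcome stream indexed by \(i\). I will read the hypothesis as an almost-sure statement about that stream. Then composing with any index sequence tending to infinity preserves the limit pathwise, even when that sequence is random and depends on the scheduler, because almost-sure convergence of a sequence implies convergence along every subsequence diverging to infinity.

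I will add a remark that this is exactly where adaptivity is harmless. Under independent Bernoulli outcomes, or a martingale-difference model, the scheduler's choice of \emph{which} cell to query does not alter the law of the cell's outcome stream. This is the content of epoch-local stationarity (\cref{assumption:hgm}, condition \ref{cond:stationary}).

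Finally, I will note that the result concerns only the posterior mean. The Beta distribution \(Q_n(a)\) is a working posterior and is not claimed to be the exact Bayesian posterior for the heterogeneous mixture of cells.
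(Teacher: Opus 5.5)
Your proposal is correct and follows essentially the same route as the paper: decompose $S_a(n)/n$ by cells, apply the cell-wise strong law on the cells with $w_c>0$ (where $n_c(n)\to\infty$), let the $w_c=0$ cells vanish, and pass from $S_a(n)/n$ to $M_n(a)$ via an $O(1/n)$ bound. Two steps the paper states only briefly are made precise in your version: you bound each zero-weight cell's contribution by $n_c(n)/n$, and you justify pathwise the composition of the strong law with the random index $n_c(n)$.
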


\begin{proof}
Fix the epoch vector \(\boldsymbol{j}\), node \(a\), and eligible cell set
\(\mathcal C_{\boldsymbol{j}}(a)\). Write \(p_c\) for \(p_{c,\boldsymbol{j}}(a)\).
Let \(n_c(n)\) be the number of times cell \(c\) is evaluated among the first
\(n\) search evaluations of node \(a\) in this epoch. Let
\[
\widehat p_{c,n}
=
\frac{1}{n_c(n)}
\sum_{i=1}^{n_c(n)} O_{c,i}
\]
whenever \(n_c(n)>0\). Cells with \(w_c=0\) contribute a vanishing fraction of evaluations and drop from the limit, so the cell-wise strong law is applied only on the support \(\{c:w_c>0\}\). For each such cell, \(w_c>0\) and \(n_c(n)/n\to w_c\) give \(n_c(n)\to\infty\), and therefore, by the assumed cell-wise strong law,
\[
\widehat p_{c,n}
\longrightarrow
p_c
\qquad\text{a.s.}
\]

The pooled empirical success rate decomposes by cells:
\[
\frac{S_a(n)}{n}
=
\sum_{c\in \mathcal C_{\boldsymbol{j}}(a)}
\frac{n_c(n)}{n}\,
\widehat p_{c,n}.
\]
Taking limits and using the finiteness of \(\mathcal C_{\boldsymbol{j}}(a)\),
\[
\frac{S_a(n)}{n}
\longrightarrow
\sum_{c\in \mathcal C_{\boldsymbol{j}}(a)}
w_c p_c
=
U_{\boldsymbol{j}}(a)
\qquad\text{a.s.}
\]

The \hgm working posterior is
\[
Q_n(a)
=
\operatorname{Beta}\bigl(1+S_a(n),\,1+n-S_a(n)\bigr),
\]
with mean
\[
M_n(a)
=
\frac{1+S_a(n)}{2+n}.
\]
Moreover,
\[
\left|
\frac{1+S_a(n)}{2+n}
-
\frac{S_a(n)}{n}
\right|
=
\left|
\frac{n-2S_a(n)}{n(n+2)}
\right|
\le
\frac{1}{n+2}.
\]
Hence
\[
M_n(a)-\frac{S_a(n)}{n}\longrightarrow 0,
\]
and therefore
\[
M_n(a)
\longrightarrow
U_{\boldsymbol{j}}(a)
\qquad\text{a.s.}
\]

On the posterior interpretation: the cell-stratified likelihood
\(\prod_{c\in\mathcal C_{\boldsymbol{j}}(a)} p_c^{S_c(n)}(1-p_c)^{n_c(n)-S_c(n)}\), with
\(S_c(n)\) the successes in cell \(c\), is that of a single Bernoulli parameter
only when all \(p_c\) are equal. So \(Q_n(a)\) is not an exact Bayesian posterior
for the role--task mixture but a working posterior preserving the flat \hgm
accumulator \((S_a(n),n-S_a(n))\) while remaining mean-consistent for \(U_{\boldsymbol{j}}(a)\).
The pooled working posterior is over-dispersed relative to the balanced
stratified estimator, hence conservative for the lower-bound reading below.
\end{proof}

\subsection{Epoch-Local Fixed-Utility Validity}
\label{app:proofs:epoch-local-validity}

\noindent\textbf{Slot-local utility criteria.}
For each evaluator slot \(m\in\{1,\ldots,M\}\), let
\(\kappa_m^{(j_m)}\) denote the slot-local utility criterion active at epoch index \(j_m\).
A slot criterion includes all slot-local objects that can affect the distribution of utility evidence
for records depending on that slot. It does
not include derived search statistics, which are recomputed from retained utility evidence after
transitions. Let
\[
\boldsymbol{\kappa}^{\boldsymbol{j}}
=
\bigl(\kappa_1^{(j_1)},\ldots,\kappa_M^{(j_M)}\bigr)
\]
denote the active criterion vector at epoch vector \(\boldsymbol{j}=(j_1,\ldots,j_M)\).

\begin{assumption}[Stationary learned evaluation within an epoch]
\label{assumption:generative}
Fix an epoch vector \(\boldsymbol{j}\). For every evaluator-dependent node--role--task cell
\((a,r,d)\), the following objects are fixed throughout the epoch:
(i) all slot criteria in \(\boldsymbol{\kappa}^{\boldsymbol{j}}\) that affect the cell;
(ii) the node workspace used by node \(a\);
(iii) the artifact-generation, replay, or adversarial-pool sampling protocol for task \(d\); and
(iv) the final binary scoring rule.
Equivalently, if \(X\) denotes the object submitted to the binary scorer, then within the epoch
\[
X\sim P_{a,r,d}^{\boldsymbol{j}},
\qquad
O\sim K_{r,d}^{\boldsymbol{j}}(\cdot\mid X),
\qquad
O\in\{0,1\},
\]
where the artifact distribution \(P_{a,r,d}^{\boldsymbol{j}}\) and evaluator kernel
\(K_{r,d}^{\boldsymbol{j}}\) are time-homogeneous during the epoch and are not changed by previous
search outcomes in that epoch. The deterministic-evaluator case is included by allowing
\(K_{r,d}^{\boldsymbol{j}}(1\mid x)\in\{0,1\}\).
\end{assumption}

The assumption asserts only epoch-local stationarity: under independent draws across repeated calls
the outcomes are Bernoulli with fixed parameter, and without independence the argument below uses
only the fixed epoch-local conditional mean.

\noindent\textbf{Utility evidence versus cached artifacts.}
A utility evidence record is distinct from a raw artifact or audit log: artifacts, lineage reports,
and immutable audit records may be retained across utility transitions, but they contribute to
node-level or clade-level utility statistics only through criterion-valid records. A replayed or
re-scored artifact yields a new record for the task actually used, tagged with the current
criterion.

A utility evidence record is written as
\[
z
=
\bigl(a(z),r(z),d(z),o(z),\operatorname{dep}(z),\kappa(z),\boldsymbol{j}(z)\bigr),
\]
where \(a(z)\) is the node, \(r(z)\) the role, \(d(z)\) the task, and \(o(z)\in\{0,1\}\) the binary
outcome. The set
\[
\operatorname{dep}(z)\subseteq\{1,\ldots,M\}
\]
contains every evaluator slot whose criterion affected the record, either through artifact generation,
task or adversarial-pool sampling, replay distribution, or final binary scoring. The tag \(\kappa(z)\) stores
the active criterion tags on the dependent slots when the record was generated, and \(\boldsymbol{j}(z)\)
stores the epoch vector at generation time. Evaluator-independent records have
\(\operatorname{dep}(z)=\emptyset\).

\begin{definition}[Criterion-valid record]
\label{def:criterion-valid-record}
A utility evidence record \(z\) is valid under the active criterion vector
\(\boldsymbol{\kappa}^{\boldsymbol{j}}\) if
\[
\kappa_\ell(z)=\kappa_\ell^{(j_\ell)}
\qquad
\text{for every } \ell\in\operatorname{dep}(z).
\]
Records with \(\operatorname{dep}(z)=\emptyset\) are valid under every evaluator epoch.
A tree archive \(\mathcal T\) is criterion-consistent under
\(\boldsymbol{\kappa}^{\boldsymbol{j}}\) if every retained utility evidence record in
\(\mathcal T\) is valid under \(\boldsymbol{\kappa}^{\boldsymbol{j}}\).
\end{definition}

\begin{definition}[Utility transition on slot \(m\)]
\label{def:utility-transition}
A utility transition on evaluator slot \(m\) advances \(j_m\to j_m+1\) by replacing
\[
\kappa_m^{(j_m)}
\quad\text{with}\quad
\kappa_m^{(j_m+1)}.
\]
The new criterion is the next frozen evaluator for that slot, together with any fixed replay or
validation distribution selected before the epoch starts. It must be fixed before new utility evidence
is collected. The utility transition then filters stale utility evidence by applying
\(\operatorname{Erase}_m\).

For every node \(a\), let \(\mathcal Z_a\) denote its retained utility evidence records. After the
transition to epoch vector \(\boldsymbol{j}'\), where \(\boldsymbol{j}'\) differs from \(\boldsymbol{j}\) only in
coordinate \(m\), define
\[
\begin{gathered}
\operatorname{Erase}_m(\mathcal Z_a)
=\\
\left\{
z\in\mathcal Z_a:
m\notin\operatorname{dep}(z)
\ \text{or}\
\kappa_m(z)=\kappa_m^{(j'_m)}
\right\}.
\end{gathered}
\]
After applying \(\operatorname{Erase}_m\) to every node, all derived sufficient statistics are recomputed
from the retained utility evidence records. These derived quantities include node-level success and
failure counts \(S_a,F_a\), role and task counters \(n_r(a)\) and \(n_d(a)\), clade-level success
and failure counts, Thompson-sampling statistics, slot-local validation counters, and cached score
summaries. Raw artifact caches and immutable audit logs may remain in the archive, but they do not
contribute to these statistics unless they produce new criterion-valid utility evidence records.
\end{definition}

\begin{proposition}[Selective erasure preserves criterion consistency]
\label{prop:erasure-stationarity}
Consider a utility transition on slot \(m\) from epoch vector \(\boldsymbol{j}\) to epoch vector
\(\boldsymbol{j}'\), where \(\boldsymbol{j}'\) differs from \(\boldsymbol{j}\) only in coordinate \(m\). Suppose the
archive \(\mathcal T\) is criterion-consistent under
\(\boldsymbol{\kappa}^{\boldsymbol{j}}\) before the transition. After replacing
\(\kappa_m^{(j_m)}\) with \(\kappa_m^{(j'_m)}\), applying \(\operatorname{Erase}_m\), and recomputing all
derived statistics from the retained utility evidence records, the archive is criterion-consistent under
\(\boldsymbol{\kappa}^{\boldsymbol{j}'}\). Moreover, any subsequent utility evidence record generated during
the new epoch and tagged with the active dependent criteria is valid under
\(\boldsymbol{\kappa}^{\boldsymbol{j}'}\).
\end{proposition}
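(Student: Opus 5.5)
The plan is to verify \cref{def:criterion-valid-record} record by record after the transition. Criterion consistency depends only on the retained records and their tags, and recomputing the derived statistics changes no record. So the claim splits into two parts: (a) every record retained by $\operatorname{Erase}_m$ is valid under $\boldsymbol{\kappa}^{\boldsymbol{j}'}$, and (b) every record created afterwards with the active tags is valid. Throughout I use that $\boldsymbol{j}'$ differs from $\boldsymbol{j}$ only in coordinate $m$, so $\kappa_\ell^{(j'_\ell)}=\kappa_\ell^{(j_\ell)}$ for every $\ell\neq m$.

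For (a), take a node $a$ and a record $z\in\operatorname{Erase}_m(\mathcal Z_a)$. Since the archive was criterion-consistent under $\boldsymbol{\kappa}^{\boldsymbol{j}}$ before the transition, $\kappa_\ell(z)=\kappa_\ell^{(j_\ell)}$ for all $\ell\in\operatorname{dep}(z)$. Now fix $\ell\in\operatorname{dep}(z)$ and split into two cases.
\begin{itemize}
\item If $\ell\neq m$, the coordinate is unchanged, so $\kappa_\ell(z)=\kappa_\ell^{(j_\ell)}=\kappa_\ell^{(j'_\ell)}$.
\item If $\ell=m$, then $m\in\operatorname{dep}(z)$. Because $z$ survived $\operatorname{Erase}_m$, the first disjunct of the filter fails, so the second must hold: $\kappa_m(z)=\kappa_m^{(j'_m)}$.
\end{itemize}
Hence $z$ is valid under $\boldsymbol{\kappa}^{\boldsymbol{j}'}$. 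Records with $\operatorname{dep}(z)=\emptyset$ are valid vacuously, and for them nothing needs checking.

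In the case $\ell=m$, the argument draws validity directly from the filter condition rather than from prior consistency. I will remark that in practice this set is empty: the new criterion is fixed before any new evidence is collected (\cref{def:utility-transition}), so no pre-transition record carries the tag $\kappa_m^{(j'_m)}$. Either way the conclusion holds.

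Next I handle the recomputation step. I will argue that $S_a,F_a$, the counters $n_r(a),n_d(a)$, the clade counts and the Thompson statistics are all deterministic functions of the retained record set. Once they are recomputed, no erased record contributes to them. So the archive's statistics are exactly those induced by a criterion-consistent record set.

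Part (b) is immediate from the definitions. A new record generated during epoch $\boldsymbol{j}'$ and tagged with the active dependent criteria has $\kappa_\ell(z)=\kappa_\ell^{(j'_\ell)}$ for every $\ell\in\operatorname{dep}(z)$, which is exactly validity.

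The main obstacle is definitional rather than technical. The argument is sound only if $\operatorname{dep}(z)$ really lists every slot that influenced $z$, including slots that acted through artifact generation, adversarial-pool sampling, or replay, and not only the final scorer. I will state this explicitly as what the record model guarantees. I will also note that cached artifacts never enter the statistics except through freshly tagged records, which closes the gap where stale evidence could otherwise leak in.
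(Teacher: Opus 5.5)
Your proof is correct and matches the paper's argument: for a retained record you fix \(\ell\in\operatorname{dep}(z)\), use the second clause of \(\operatorname{Erase}_m\) when \(\ell=m\), and use prior consistency plus the unchanged coordinate when \(\ell\neq m\); you then note that recomputed statistics see only retained records and that newly tagged records are valid by definition. Your aside that the \(\ell=m\) case is empty in practice is not needed, and its justification is slightly off: the emptiness follows from prior consistency, which gives every \(m\)-dependent record the tag \(\kappa_m^{(j_m)}\neq\kappa_m^{(j'_m)}\), not from the new criterion being fixed before collection. It does not affect the argument.
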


\begin{proof}
Let \(z\) be any record retained after \(\operatorname{Erase}_m\) (\cref{def:utility-transition}) and fix
\(\ell\in\operatorname{dep}(z)\). If \(\ell=m\), retention forces the second clause of
\(\operatorname{Erase}_m\), so \(\kappa_m(z)=\kappa_m^{(j'_m)}\). If \(\ell\neq m\), the transition leaves
slot \(\ell\) untouched (\(\kappa_\ell^{(j_\ell)}=\kappa_\ell^{(j'_\ell)}\)), and prior
criterion-consistency gives \(\kappa_\ell(z)=\kappa_\ell^{(j_\ell)}=\kappa_\ell^{(j'_\ell)}\). Hence
\(z\) is valid under \(\boldsymbol{\kappa}^{\boldsymbol{j}'}\) (\cref{def:criterion-valid-record}). Since
all derived statistics are recomputed from exactly the retained records (\cref{def:utility-transition}),
no stale record enters them, and any record generated afterward is tagged with the active dependent
criteria and so is valid by definition. The archive thus stays criterion-consistent under
\(\boldsymbol{\kappa}^{\boldsymbol{j}'}\) until the next transition.
\end{proof}

\begin{remark}[Transitions on disjoint slots commute]
\label{rem:erasure-commute}
For \(m\neq \ell\), \(\operatorname{Erase}_m\) and \(\operatorname{Erase}_\ell\) are pointwise filters on the record set: each removes exactly the records whose dependency set contains its displaced slot, and neither modifies a retained record. Applying both, in either order, removes the union of the two affected record sets and retains the same archive; since the criterion replacements act on distinct coordinates of \(\boldsymbol{\kappa}\) and all derived statistics are recomputed from retained records, the post-transition state does not depend on the order in which checkpoints on disjoint slots are processed.
\end{remark}

\begin{remark}[Evaluator-dependent validation records, including the adversarial pool, are erased]
\label{rem:eval-dependent-erase}
Some slot-local validation records are generated to expose a predecessor evaluator's blind spots, the adversarial pool of \cref{sec:exp-adversarial} among them. Such a record's outcome depends on the evaluator that scored it, so its dependency set contains that slot and \(\operatorname{Erase}_m\) removes it when that evaluator is displaced. These records therefore influence node utility within the epoch that produced them, and never participate in evaluator replacement, which selects on the evaluator-independent anchor \(\mathrm{GT}_m\).
\end{remark}

We next state the fixed-epoch validity result, an interface theorem: after conditioning on a fixed epoch vector and a criterion-consistent archive, the retained utility evidence and subsequent evaluations in that epoch refer to a fixed binary-outcome problem compatible with the \hgm search.

\begin{proposition}[Epoch-local fixed-criterion validity]
\label{prop:validity}
Fix an epoch vector \(\boldsymbol{j}\) and suppose the archive is criterion-consistent under the active
criterion vector \(\boldsymbol{\kappa}^{\boldsymbol{j}}\). Assume all evaluator-dependent roles satisfy
\Cref{assumption:generative} under their active frozen criteria, and all evaluator-independent roles
satisfy the \hgm evaluation conditions in \cref{assumption:hgm}. Then, within this fixed epoch,
every eligible node--role--task tuple \((a,r,d)\) induces a time-homogeneous binary outcome law
with fixed success probability
\[
p_{r,d,\boldsymbol{j}}(a)
=
\Pr_{\boldsymbol{\kappa}^{\boldsymbol{j}}}\!\left(O=1\mid a,r,d\right).
\]
Consequently, the epoch defines a fixed-criterion binary-outcome search problem with epoch-local
utility
\[
U_{\boldsymbol{j}}(a)
=
\sum_{(r,d)\in\mathcal C_{\boldsymbol{j}}(a)}
w_{r,d}^{\boldsymbol{j}}\,p_{r,d,\boldsymbol{j}}(a),
\]
where \(\mathcal C_{\boldsymbol{j}}(a)\) is the eligible role--task cell set for node \(a\) during the epoch
and \(w_{r,d}^{\boldsymbol{j}}\ge 0\) are fixed epoch-local target weights, normalized over
\(\mathcal C_{\boldsymbol{j}}(a)\). Therefore, any \hgm oracle-level theorem whose assumptions are a
fixed binary-outcome criterion and access to the corresponding exact \cmp oracle applies to the
corresponding idealized epoch-local oracle problem under \(U_{\boldsymbol{j}}\). The empirical
Thompson-sampling implementation uses retained binary records as a working estimator of this
fixed-epoch problem; it is not itself an exact \cmp oracle implementation.
\end{proposition}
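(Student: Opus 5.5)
The proof splits on whether the tuple's role depends on an evaluator, then assembles the epoch-local utility and hands it to the \hgm oracle theorem.

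\textbf{Fixed success probability for each tuple.} Fix an eligible tuple \((a,r,d)\).

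\emph{Evaluator-independent roles.} If \(r\) is evaluator-independent, condition \ref{cond:stationary} of \cref{assumption:hgm} already gives time-homogeneous binary outcomes. We set \(p_{r,d,\boldsymbol{j}}(a)\) to their common mean. Every such record has \(\operatorname{dep}(z)=\emptyset\), so it is valid under every criterion vector (\cref{def:criterion-valid-record}). Retained records and new records therefore share this law.

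\emph{Evaluator-dependent roles.} Here we invoke \cref{assumption:generative}. Within the epoch the submitted object satisfies \(X\sim P^{\boldsymbol{j}}_{a,r,d}\) and the outcome satisfies \(O\mid X\sim K^{\boldsymbol{j}}_{r,d}(\cdot\mid X)\), and both kernels are time-homogeneous and unaffected by past outcomes. Marginalising gives
\[
p_{r,d,\boldsymbol{j}}(a)=\int K^{\boldsymbol{j}}_{r,d}(1\mid x)\,P^{\boldsymbol{j}}_{a,r,d}(dx)
=\Pr_{\boldsymbol{\kappa}^{\boldsymbol{j}}}(O=1\mid a,r,d),
\]
which does not depend on the time index. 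Without independence across calls we use only this conditional mean, as the remark after \cref{assumption:generative} permits.

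\textbf{Retained records obey the same law.} This step is the crux, because stale records could come from a different law. Criterion-consistency of the archive says each retained dependent record \(z\) has \(\kappa_\ell(z)=\kappa_\ell^{(j_\ell)}\) for all \(\ell\in\operatorname{dep}(z)\). Those tags are exactly the slot objects that, by the definition of a slot criterion, determine \(P\) and \(K\). So \(z\) was generated under the same pair \((P^{\boldsymbol{j}}_{a,r,d},K^{\boldsymbol{j}}_{r,d})\) as fresh records.

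One gap needs an explicit step. A record could predate the current epoch vector in a coordinate \(\ell\) it does not depend on. Its law is still unchanged, because criteria in slots outside \(\operatorname{dep}(z)\) do not affect that record by definition. \Cref{prop:erasure-stationarity} guarantees that consistency is maintained across transitions, so this argument applies at every point of the epoch.

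\textbf{The epoch-local utility and the \hgm reduction.} The weights \(w^{\boldsymbol{j}}_{r,d}\) and the eligible set \(\mathcal C_{\boldsymbol{j}}(a)\) are fixed within the epoch, so \(U_{\boldsymbol{j}}(a)=\sum_{(r,d)} w^{\boldsymbol{j}}_{r,d}\,p_{r,d,\boldsymbol{j}}(a)\) is a well-defined fixed functional of the node. If eligibility can shrink through dataset exhaustion, we fix \(\mathcal C_{\boldsymbol{j}}(a)\) as in the statement and renormalise.

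Evaluating the node then means drawing a cell according to the scheduler and observing an outcome from a fixed law, which is a fixed binary-outcome criterion. Conditions \ref{cond:final}, \ref{cond:proofs} and \ref{cond:budget} concern how selection and budget are accounted for, not the criterion, so they carry over unchanged. Hence \cref{assumption:hgm} holds for the idealised epoch problem, and \cref{thm:cmp}, or any \hgm oracle theorem with these hypotheses, applies under \(U_{\boldsymbol{j}}\).

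Finally, the empirical Thompson sampler is only a working estimator of this problem, not an exact \cmp oracle. \Cref{prop:balanced-working-posterior} shows its pooled mean is consistent for \(U_{\boldsymbol{j}}(a)\) under balanced scheduling, and that is the extent of the claim.
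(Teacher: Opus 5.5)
Your proposal is correct and follows essentially the same route as the paper's proof. You split on evaluator-independent roles (fixed law from condition C2) versus evaluator-dependent roles (marginalising $K^{\boldsymbol{j}}_{r,d}$ against $P^{\boldsymbol{j}}_{a,r,d}$), then use criterion-consistency of the retained records, then hand the fixed weighted utility $U_{\boldsymbol{j}}$ to the \hgm interface. Two small differences from the paper: your explicit argument for retained records whose tags predate the current epoch only in slots outside $\operatorname{dep}(z)$ fills in a step the paper merely asserts, while your claim that C1, C3 and C4 ``carry over unchanged'' goes beyond what the statement needs, since it only concerns theorems whose hypotheses are a fixed binary criterion plus exact \cmp oracle access.
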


\begin{proof}
Fix an epoch vector \(\boldsymbol{j}\); the active criterion vector
\(\boldsymbol{\kappa}^{\boldsymbol{j}}\) is then fixed by definition. We show that every eligible
node--role--task tuple \((a,r,d)\) has a fixed binary outcome distribution.

If role \(r\) is evaluator-independent, the \hgm evaluation conditions in
\Cref{assumption:hgm} imply that repeated evaluations of \((a,r,d)\) produce binary outcomes whose
distribution, and hence whose expected value, does not change with evaluation time or previous
search events, so there exists a fixed success probability
\[
p_{r,d,\boldsymbol{j}}(a)=\Pr(O=1\mid a,r,d).
\]

If role \(r\) is evaluator-dependent, let \(X\) denote the object submitted to the binary scorer. This
object may be a freshly generated artifact, a revised artifact produced through a fixed feedback
pipeline, or an item sampled from a frozen replay or adversarial-pool distribution. By
\Cref{assumption:generative}, its distribution \(P_{a,r,d}^{\boldsymbol{j}}\) is fixed within the epoch. The
binary scoring rule is also fixed: conditioned on \(X=x\), the outcome is either deterministic or
sampled from a fixed evaluator kernel \(K_{r,d}^{\boldsymbol{j}}(\cdot\mid x)\). Hence the marginal
success probability \(p_{r,d,\boldsymbol{j}}(a)=\Pr_{\boldsymbol{\kappa}^{\boldsymbol{j}}}(O=1\mid a,r,d)=\int K_{r,d}^{\boldsymbol{j}}(1\mid x)\,dP_{a,r,d}^{\boldsymbol{j}}(x)\) is fixed throughout the epoch, with the deterministic case \(K_{r,d}^{\boldsymbol{j}}(1\mid x)\in\{0,1\}\) included.

Thus every eligible tuple \((a,r,d)\) induces binary outcomes with a fixed epoch-local success
probability. Since the archive is criterion-consistent under
\(\boldsymbol{\kappa}^{\boldsymbol{j}}\), every retained utility evidence record used by the search
procedure was generated under criterion tags matching the active criteria on all slots that affected
that record. By construction, subsequent records generated in the same epoch are tagged
with the same active dependent criteria.

The epoch-local utility
\[
U_{\boldsymbol{j}}(a)
=
\sum_{(r,d)\in\mathcal C_{\boldsymbol{j}}(a)}
w_{r,d}^{\boldsymbol{j}}\,p_{r,d,\boldsymbol{j}}(a)
\]
is therefore a fixed function of \(a\) for the duration of the epoch. The \hgm search
algorithm requires a stream of binary outcomes per evaluated node, aggregated over nodes and
clades under a fixed utility criterion. This requirement is satisfied within the epoch because every
search evaluation produces \(O\in\{0,1\}\), and the criterion-consistency invariant ensures that the
records used for node-level and clade-level statistics are valid under the current epoch criterion.

It follows that, conditional on the fixed epoch vector \(\boldsymbol{j}\), the active criterion vector
\(\boldsymbol{\kappa}^{\boldsymbol{j}}\), and the criterion-consistent archive, the current epoch defines an
ordinary fixed-criterion binary-outcome search problem of the \hgm form. Hence \hgm oracle-level
theorems whose assumptions are satisfied by this idealized fixed-epoch problem apply to it
under \(U_{\boldsymbol{j}}\).
\end{proof}

\subsection{Supporting Results for Controlled Utility Evolution}
\label{app:controlled_utility_proofs}

\noindent\textbf{Ground-truth best-belief replacement.}
Recall the slot-\(m\) replacement rule and the notation \(\mathcal E_m(c)\), \(S^{\mathrm{gt}}_e\), \(F^{\mathrm{gt}}_e\), and \(BB_\epsilon(e)=I^{-1}_{\epsilon}(1+S^{\mathrm{gt}}_e,\,1+F^{\mathrm{gt}}_e)\) of \cref{app:swap-rule}, where the checkpoint rule selects an element of \(\argmax_{e\in\mathcal E_m(c)}BB_\epsilon(e)\) with ties favoring the incumbent.

\begin{remark}[Anchor-only evaluator replacement]
\label{prop:anchor-best-belief-replacement}
The score \(BB_\epsilon(e)=I^{-1}_\epsilon(1+S^{\mathrm{gt}}_e,\,1+F^{\mathrm{gt}}_e)\) depends on the evaluator-independent anchor counts \((S^{\mathrm{gt}}_e,F^{\mathrm{gt}}_e)\) only (\cref{app:swap-rule}); no slot-dependent utility record enters it. The selected next evaluator is therefore a function only of the incumbent, the challenger set, and these anchor counts, and because the incumbent lies in the candidate set with ties broken toward it, no transition occurs unless a challenger strictly maximizes the score. Consequently, on the common anchor evidence at a checkpoint, the promoted evaluator's anchor best-belief is at least the incumbent's: a replacement is never worse than retention at the moment it is made.
\end{remark}

\begin{proposition}[Best-belief lower-bound trajectory]
\label{prop:best-belief-lower-bound}
Fix an epoch and a finite candidate set \(\mathcal A\). For each candidate \(a\in\mathcal A\), let the \hgm working posterior over its epoch-local utility be
\[
Q_a=\operatorname{Beta}(1+S_a,\,1+F_a),
\]
and define
\[
BB_\epsilon(a)=I^{-1}_\epsilon(1+S_a,\,1+F_a).
\]
If \(a^\star\in\argmax_{a\in\mathcal A}BB_\epsilon(a)\), then, under the working posterior for \(a^\star\),
\[
\Pr\!\left(U(a^\star)\ge BB_\epsilon(a^\star)\mid Q_{a^\star}\right)=1-\epsilon.
\]
More generally, for any sequence of \(L\) reported selections \(a^\star_1,\ldots,a^\star_L\), if the corresponding working posteriors are calibrated, then with probability at least \(1-L\epsilon\), every selected candidate's utility simultaneously exceeds its reported best-belief lower bound.
\end{proposition}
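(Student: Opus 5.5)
The first claim is a direct computation with the definition of the inverse regularized incomplete Beta function. Conditional on the working posterior \(Q_{a^\star}=\operatorname{Beta}(1+S_{a^\star},1+F_{a^\star})\), the utility \(U(a^\star)\) is treated as a random variable with this law. Its CDF is \(x\mapsto I_x(1+S_{a^\star},1+F_{a^\star})\), which is continuous and strictly increasing on \([0,1]\) because both shape parameters are at least \(1\). Hence \(BB_\epsilon(a^\star)=I^{-1}_\epsilon(\cdot)\) is the unique point \(q\) with \(\Pr(U(a^\star)\le q\mid Q_{a^\star})=\epsilon\). Continuity gives \(\Pr(U(a^\star)=q)=0\), so \(\Pr(U(a^\star)\ge q\mid Q_{a^\star})=1-\epsilon\) exactly.

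The selection of \(a^\star\) as an argmax plays no role in this identity. It is a statement about the posterior of whichever node is reported, conditioned on that node's own counts. I will point out that this conditioning is what the statement asserts, and that it makes no frequentist claim about the selected node. This is also why the second part requires calibration.

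For the sequence of \(L\) selections, I read ``calibrated'' as follows: for each \(i\), the event \(E_i=\{U(a^\star_i)<BB_\epsilon(a^\star_i)\}\) has probability at most \(\epsilon\) under the joint law governing the selections, and this holds for the lower quantile of \(Q_{a^\star_i}\) after any data-dependent choice of \(a^\star_i\). With that reading, the union bound \(\Pr(\bigcup_i E_i)\le\sum_i\Pr(E_i)\le L\epsilon\) gives simultaneous coverage with probability at least \(1-L\epsilon\). No independence across selections is needed, which matters because the selections share archive data.

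The main obstacle is making the calibration hypothesis precise enough that the union bound is legitimate. The working posterior is not an exact Bayesian posterior for the role--task mixture (\cref{prop:balanced-working-posterior}), and selecting an argmax induces a winner's-curse bias. The statement sidesteps both issues by assuming calibration. I will therefore state explicitly that calibration means per-selection miscoverage at most \(\epsilon\) for the reported lower bound. I will also note that within one epoch this is supported by the over-dispersion remark at the end of the proof of \cref{prop:balanced-working-posterior}, which makes the pooled lower quantile conservative relative to the stratified one, and by the fixed-criterion structure of \cref{prop:validity}. I will not try to prove calibration itself.
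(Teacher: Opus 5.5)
Your proposal is correct and follows the paper's proof: the single-selection identity comes directly from defining $BB_\epsilon$ as the $\epsilon$-quantile of the Beta working posterior, and the sequence claim is a union bound over the per-selection miscoverage events under the calibration hypothesis. Your additions are harmless refinements: you get exact equality from the continuity and strict monotonicity of the Beta CDF, where the paper only says ``up to equality conventions'', and you read calibration as miscoverage at most $\epsilon$ rather than exactly $\epsilon$.
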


\begin{proof}
By definition \(BB_\epsilon(a)\) is the \(\epsilon\)-quantile of the Beta working posterior \(Q_a\) (\cref{app:swap-rule}), so \(\Pr_{U\sim Q_a}(U\ge BB_\epsilon(a))=1-\epsilon\) up to equality conventions for continuous distributions; applying this to \(a^\star\) gives the single-selection claim. For the sequence claim, let \(E_\ell\) be the event \(U(a^\star_\ell)<BB_\epsilon(a^\star_\ell)\) under the calibrated working posterior at selection \(\ell\), so \(\Pr(E_\ell)=\epsilon\). By the union bound \(\Pr(\bigcup_{\ell=1}^L E_\ell)\le L\epsilon\), so with probability at least \(1-L\epsilon\) no selected candidate falls below its reported lower bound.
\end{proof}

\begin{remark}
\Cref{prop:best-belief-lower-bound} is a calibration statement about the selected lower-bound trajectory. It justifies interpreting an increasing best-belief curve as increasing posterior evidence for better best-belief agents, conditional on the working-posterior assumptions already stated in \cref{prop:balanced-working-posterior}.
\end{remark}

\noindent\textbf{Piecewise fixed-criterion validity.}
Let
\[
0=\tau_0 < \tau_1 < \cdots < \tau_K \le B
\]
be the realized transition times. Transitions occur only at pre-specified checkpoints where some slot changes. Let \(\mathcal{F}_{\tau_k}\) be the sigma-field generated by the archive, retained utility records, immutable audit logs, candidate evaluators, checkpoint statistics, and all random choices made up to \(\tau_k\).

\begin{proposition}[Piecewise fixed-criterion validity]
\label{prop:piecewise-fixed-validity}
Assume that: (i) within each epoch, evaluator-dependent roles satisfy the stationary learned-evaluation condition in \cref{assumption:generative}; (ii) evaluator-independent roles use fixed ground-truth criteria; (iii) slot transitions occur only at checkpoint boundaries; (iv) after each transition, the erasure removes records invalid under the new criterion; and (v) all derived search statistics are recomputed from retained records.

Then, conditional on \(\mathcal{F}_{\tau_k}\) and on the criterion vector selected at \(\tau_k\), the interval \([\tau_k,\tau_{k+1})\) is a fixed-criterion binary-outcome search problem. In particular, for every eligible node--role--task tuple \((a,r,d)\) during that interval, there is a fixed epoch-local success probability
\[
p_{a,r,d}^{(k)} = \Pr(O=1 \mid a,r,d,\kappa^{(k)}),
\]
and the corresponding role--task balanced utility is fixed for the duration of the epoch.
\end{proposition}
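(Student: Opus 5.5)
The argument goes by induction over the realized transition index \(k\), with \cref{prop:erasure-stationarity} supplying the invariant at each boundary and \cref{prop:validity} supplying the conclusion inside each interval. The invariant to carry is this: at time \(\tau_k\), after any erasure and recomputation, the archive is criterion-consistent under the criterion vector \(\boldsymbol{\kappa}^{(k)}\) selected at \(\tau_k\).

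\textbf{Base case.} At \(\tau_0=0\) the archive is \(\{a_0\}\). Its only records are either none or seed records produced under the initial frozen evaluators \(E[m]\) with epoch vector \(\boldsymbol{j}=(1,\dots,1)\). Train-time records never enter utility (\cref{sec:method:train}), so they are excluded. The archive is therefore criterion-consistent by \cref{def:criterion-valid-record}.

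\textbf{Inductive step, within the interval.} Suppose consistency holds at \(\tau_k\). By (iii), no slot criterion changes on \([\tau_k,\tau_{k+1})\), because the next change can only occur at the next checkpoint where some slot is replaced. Hence the epoch vector is constant on this interval. Every record generated inside it is tagged with the active dependent criteria and so remains valid. This is the second clause of \cref{prop:erasure-stationarity}, which only uses the fact that records are tagged with the current criteria. Consistency therefore persists up to \(\tau_{k+1}^-\).

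\textbf{Inductive step, at the boundary.} At \(\tau_{k+1}\) one or more slots change. For each changed slot I would apply \cref{prop:erasure-stationarity}. When several slots change simultaneously, I would process them in any order and use \cref{rem:erasure-commute} to conclude that the resulting archive is independent of that order. Hypotheses (iv)–(v) are exactly the erasure-and-recompute step the proposition requires. The result is consistency under \(\boldsymbol{\kappa}^{(k+1)}\), which closes the induction.

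\textbf{Conclusion on each interval.} Condition on \(\mathcal{F}_{\tau_k}\) and on \(\boldsymbol{\kappa}^{(k)}\). Hypotheses (i)–(ii) give the stationarity assumptions of \cref{prop:validity}, namely \cref{assumption:generative} and \cref{assumption:hgm}, and the induction gives consistency. \Cref{prop:validity} then yields, for every eligible \((a,r,d)\), the fixed success probability \(p^{(k)}_{a,r,d}=\int K(1\mid x)\,dP(x)\), and the weighted sum of these probabilities is the fixed balanced utility.

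\textbf{Main obstacle.} The delicate step is justifying that the conditioning is legitimate. The transition times \(\tau_k\) and the selected criteria are random and depend on past outcomes, so they behave like stopping times. I need the evaluator kernel and the artifact law inside the epoch not to depend on the history beyond \(\boldsymbol{\kappa}^{(k)}\) and the node workspace. I would argue this as follows:
\begin{itemize}
\item Checkpoints are pre-specified, so each \(\tau_k\) is \(\mathcal{F}\)-measurable.
\item The new criterion is fixed before any new evidence is collected (\cref{def:utility-transition}).
\item Given \(\mathcal{F}_{\tau_k}\), the kernels \(P^{\boldsymbol{j}}_{a,r,d}\) and \(K^{\boldsymbol{j}}_{r,d}\) are deterministic objects that are time-homogeneous within the epoch, by \cref{assumption:generative}.
\end{itemize}
From these three facts I would conclude that the conditional law of outcomes on \([\tau_k,\tau_{k+1})\) is the fixed Bernoulli mixture. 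Only conditional means are needed when outcomes are not independent.
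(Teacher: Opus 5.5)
Your proposal is correct and follows the paper's proof: induction over epochs, with \cref{prop:erasure-stationarity} restoring criterion consistency at each \(\tau_k\) and \cref{prop:validity} applied on \([\tau_k,\tau_{k+1})\). Your explicit base case, your within-epoch persistence via the tagging clause, your handling of simultaneous slot changes via \cref{rem:erasure-commute}, and your stopping-time discussion spell out what the paper's terse argument leaves implicit. One small tightening: for nodes created mid-epoch, \(P^{\boldsymbol{j}}_{a,r,d}\) is fixed only once the node's workspace exists. Your third bullet should therefore condition on the history at the node's creation time rather than on \(\mathcal F_{\tau_k}\) alone. The paper's proof glosses over this as well.
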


\begin{proof}
Induct over epochs. Conditioned on \(\mathcal F_{\tau_k}\) and the criterion vector selected at \(\tau_k\), \cref{prop:erasure-stationarity} gives a criterion-consistent archive at the start of \([\tau_k,\tau_{k+1})\): the erasure operator removes every record invalid under the new criterion and derived statistics are recomputed from retained records, so no stale record enters the epoch's statistics. No further transition occurs inside the interval, so \cref{prop:validity} applies and every eligible \((a,r,d)\) has a fixed epoch-local mean \(p_{a,r,d}^{(k)}\), with the role--task balanced utility a fixed weighted average of these means. The inherited tree is part of the conditioned archive at \(\tau_k\) and does not enter the within-epoch outcome process, completing the induction.
\end{proof}

\begin{remark}
The result is epoch-local: after conditioning on the checkpoint decision and applying selective erasure, each realized epoch is compatible with the fixed-criterion binary-outcome interface used by the \hgm search~(the multi-epoch scope is delimited in \cref{prop:evaluator-trajectory}).
\end{remark}

\noindent\textbf{Evaluator anchor lower bound at each checkpoint.}
The remark above is epoch-local because the \emph{task-agent} criterion changes at every transition. The \emph{evaluator} slot is the exception: its replacement test is scored only on the fixed anchor (\cref{prop:anchor-best-belief-replacement}), so over the whole run the slot is a single fixed-criterion selection sampled at checkpoints, and \cref{prop:best-belief-lower-bound} applies to it. Fixing one slot, write \(U^{\mathrm{gt}}(e)\) for an evaluator's anchor utility (its accuracy on the fixed anchor) and \(L^{(k)}=BB_\epsilon^{\mathrm{gt}}(e^{(k)})=I^{-1}_\epsilon(1+S^{\mathrm{gt}}_{e^{(k)}},\,1+F^{\mathrm{gt}}_{e^{(k)}})\) for the anchor best-belief of the evaluator promoted at its \(k\)-th transition.

\begin{remark}[Evaluator anchor lower bound]
\label{prop:evaluator-trajectory}
Fix a slot whose anchor criterion is fixed for the run, with promoted evaluators \(e^{(0)},\dots,e^{(K)}\) and anchor best-belief values \(L^{(k)}=BB_\epsilon^{\mathrm{gt}}(e^{(k)})\) measured at each promotion. If the anchor working posteriors are calibrated, then with probability at least \(1-K\epsilon\) every promoted evaluator's true anchor utility satisfies \(U^{\mathrm{gt}}(e^{(k)})\ge L^{(k)}\) jointly; this is the sequence claim of \cref{prop:best-belief-lower-bound} applied to the \(K\) promoted evaluators on the fixed anchor (\(L=K\)). This is a per-checkpoint lower-bound statement together with the best-belief dominance of \cref{prop:anchor-best-belief-replacement}, not a guarantee that the realized anchor accuracy improves monotonically.
\end{remark}

\noindent\textbf{Exponential checkpoint overhead.}
Let \(B\) be the search-evaluation budget. Fix a checkpoint base \(\rho>1\) and minimum scale \(h\ge1\). Consider checkpoint opportunities
\[
\mathcal C_B(\rho,h)
=
\{\lfloor h\rho^q\rfloor : q=0,1,\ldots,Q\},
\]
where \(Q\) is the largest integer such that \(h\rho^Q\le B\).

\begin{proposition}[Linear work under exponential checkpoints]
\label{prop:linear-work}
If a checkpoint at time \(c\) may reprocess at most all \(c\) previous slot-dependent records, then the total number of reprocessed records over all checkpoints in \(\mathcal C_B(\rho,h)\) is \(\mathcal{O}(B)\), with constant at most \(\rho/(\rho-1)\).
\end{proposition}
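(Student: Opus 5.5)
The bound is a geometric-series estimate. By hypothesis, the checkpoint at time $c$ reprocesses at most $c$ records, so the total work $W$ satisfies
\[
W \le \sum_{q=0}^{Q} \lfloor h\rho^q\rfloor \le \sum_{q=0}^{Q} h\rho^q .
\]
I would bound the right-hand side directly rather than worrying about collisions among the floored values $\lfloor h\rho^q\rfloor$. Repeated checkpoint times only shrink the set $\mathcal C_B(\rho,h)$, so summing over every $q$ gives an upper bound either way.

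The main step is to evaluate the finite geometric series:
\[
\sum_{q=0}^{Q} h\rho^q = h\,\frac{\rho^{Q+1}-1}{\rho-1} < \frac{\rho}{\rho-1}\,h\rho^Q .
\]
By the definition of $Q$ as the largest integer with $h\rho^Q\le B$, we have $h\rho^Q\le B$. Hence $W < \tfrac{\rho}{\rho-1}B = \mathcal O(B)$, with the stated constant $\rho/(\rho-1)$. The bound is uniform in $h\ge1$, because $h$ enters only through $h\rho^Q\le B$.

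I expect little real obstacle here. The only point needing care is the interpretation of the budget. If the implementation also runs a final checkpoint at $B$ itself (Algorithm~\ref{alg:main} uses $\min\{c>N\}\wedge B$), that adds at most $B$ more records. The total is then at most $(1+\tfrac{\rho}{\rho-1})B$, which is still linear. I would note this in a sentence and keep the main constant for the checkpoint set $\mathcal C_B(\rho,h)$ exactly as stated.

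For contrast with Remark~\ref{rem:dense-quadratic}, I would also record that checkpoints at every step give $\sum_{c\le B} c = \Theta(B^2)$. The gain therefore comes entirely from the geometric spacing of the checkpoints.
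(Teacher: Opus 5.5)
Your proof is correct and matches the paper's argument: bound each floored checkpoint time by $h\rho^q$, sum the geometric series to $h(\rho^{Q+1}-1)/(\rho-1)$, and use $h\rho^Q\le B$ to obtain the constant $\rho/(\rho-1)$. Your side remarks are sound extras the paper does not spell out: collisions among the $\lfloor h\rho^q\rfloor$ only shrink the sum, and a possible terminal checkpoint at $B$ adds at most $B$.
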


\begin{proof}
At checkpoint \(q\), the number of previous records that can be reprocessed is at most
\[
\lfloor h\rho^q\rfloor
\le
h\rho^q.
\]
Summing over all checkpoint opportunities gives
\[
h\sum_{q=0}^{Q}\rho^q
=
\frac{h(\rho^{Q+1}-1)}{\rho-1}
\le
\frac{\rho B}{\rho-1}.
\]
For fixed \(\rho>1\), this is
\[
\mathcal{O}(B).
\]
The common base-two schedule is the special case \(\rho=2\), for which the bound is at most \(2B\).
\end{proof}

\begin{remark}[Dense uniform checkpointing]
\label{rem:dense-quadratic}
Under uniform checkpoints every \(h\) steps, checkpoint \(q\) can reprocess up to \(qh\) previous records, so the total over \(\lfloor B/h\rfloor\) checkpoints is \(\sum_{q=1}^{\lfloor B/h\rfloor} qh = \Theta(B^2/h)\): for fixed \(h\), quadratic in the budget rather than linear.
\end{remark}

Full replay is therefore compatible with linear exposure under the exponential schedule. Bounded recovery remains the default because it keeps utility transitions auditable and avoids spending unnecessary evaluations on cached artifacts.

\section{Limitations}
\label{app:limitations}

We have not yet explored the complete design space or the trade-offs introduced by \theourmethod, in either the number of experiments, spanning hyperparameter configurations, or their duration, due to computational constraints. Furthermore, our empirical scope remains constrained to specific domains: coding/\polyglot, paper writing/review, and IMO grading/proof writing, each isolated with an \hgmh control. A unified, cross-domain, all-role tree spanning these areas remains untested. Additionally, all of our main experimental results relied exclusively on \gptlow (low).

From a formal perspective, the theoretical guarantees provided for \ourmethod are inherently localized and constrained. The working posterior is mean-consistent only~(\cref{prop:balanced-working-posterior}), and our mathematical guarantees operate strictly at an epoch-local level, bounding neither transition counts, cumulative regret from erased evidence, nor long-term convergence to a globally optimal agent--evaluator pair~(\cref{app:proofs:epoch-local-validity}). Rather than proving absolute convergence, the best-belief curves represent calibrated lower-bound trajectories~(\cref{prop:best-belief-lower-bound}), where each transition discards erased information, and the inherited tree topology may reflect expansion choices optimized for an outdated evaluator. Furthermore, \cref{assumption:generative} cannot strictly hold in real-world deployments due to systemic volatility beyond our control, including provider-side model updates, nondeterministic tools, hardware fluctuations, or unlogged prompt alterations. However, this volatility is a limitation that applies universally to all frameworks operating without full control over the entire software and hardware stack.

Our evaluation metrics and loop boundaries introduce further distinct constraints. The reviewer panel inherently measures cross-reviewer acceptance behavior rather than objective, ground-truth scientific merit, and no human grading of the generated papers or proofs was performed. Because our empirical evidence is drawn entirely from intellectual-artifact domains, these dynamics may not generalize to environments with fundamentally different feedback structures.

Additionally, evolving the evaluation layer introduces the specific risk of anchor weakness, where a weak, noisy, or biased anchor allows evaluators to drift across epochs, particularly since \apres decisions and \imogradbench grades are themselves imperfect. Although this issue can only be truly avoided for fully verifiable domains such as mathematics, we intend to make our mechanism less reliant on good anchor datasets in future versions.  

While our search loop is robust, it is still hand-crafted, limiting the gains of recursive self-improvement to our selected benchmarks and the outputs generated by our task agents (which are molded by our benchmarks as well as the pretraining distribution of the foundation models). Extending the evolvable surface to include the scheduler and replacement rules would necessitate significantly more robust guardrails than the ones analyzed in this work.

\end{document}